\newtheorem{lemma}{Lemma}[section]
\newtheorem{remark}{Remark}[section]
\title{\texttt{C2-DPO}: Constrained Controlled \\ Direct Preference Optimization}
\author{%
  Kavosh Asadi\\
  Amazon\\
  \And
  Julien Han\\
  Amazon \\
  \AND
  Idan Pipano \\
  Technion \\
  \And
  Xingzi Xu \\
  Duke \\
  \And
  Dominique Perrault-Joncas \\
  Amazon \\
  \And
  Shoham Sabach \\
  Amazon \\
  \And
  Karim Bouyarmane\\
  Amazon \\
  \And
  Mohammad Ghavamzadeh \\
  Amazon \\
}
\begin{document}
%\twocolumn[
%\icmltitle{A Classification Perspective on Direct Preference Optimization}
%\end{icmlauthorlist}

%\affiliation{yyy}{Department of XXX, University of YYY, Location, Country}
%\correspondingauthor{Firstname2 Lastname2}{first2.last2@www.uk}

% You may provide any keywords that you
% find helpful for describing your paper; these are used to populate
% the "keywords" metadata in the PDF but will not be shown in the document
%\icmlkeywords{Machine Learning, ICML}

%\vskip 0.3in
\maketitle

%\printAffiliationsAndNotice{\icmlEqualContribution}
\setlength{\parindent}{0pt}
\newcommand{\E}[1]{\mathbb{E}_{#1}}
\newcommand{\piRef}{\pi_{\textrm{ref}}}
\newcommand{\assign}{\mathrel{\mathop:}=}
\newcommand{\argmax}{\operatorname{arg\,max}} 

\begin{abstract}
Direct preference optimization (\texttt{DPO}) has emerged as a promising approach for solving the alignment problem in AI. In this paper, we make two counter-intuitive observations about \texttt{DPO}. First, we show that \texttt{DPO} loss could be derived by starting from an alternative optimization problem that only defines the KL guardrail on in-sample responses, unlike the original RLHF problem where guardrails are defined on the entire distribution. Second, we prove a surprising property of this alternative optimization problem, namely that under its optimal policy, both preferred and rejected responses tend to decrease in probability, a phenomenon typically displayed by DPO in practice. To control this behavior, we propose a set of constraints designed to limit the displacement of probability mass between the preferred and rejected responses in the reference and target policies. The resulting algorithm, which we call Constrained Controlled DPO (\texttt{C2-DPO}), has a meaningful RLHF interpretation. By hedging against the displacement, \texttt{C2-DPO} provides practical improvements over vanilla \texttt{DPO} when aligning several language models using standard preference datasets. 
\end{abstract}
\section{Introduction} \label{sec:intro}

Ensuring that AI systems act in accordance with human preferences, also known as the {\em alignment} problem, has become a critical focus in machine learning. Reinforcement Learning from Human Feedback (RLHF) has emerged as one promising approach~\citep{RLHF}. RLHF proceeds by first learning a reward model (RM), and then employing standard RL algorithms to maximize the RM while keeping the model close to a reference model. Recent years have witnessed the emergence of algorithms that solve the two RLHF sub-problems in a single step, chief among them being Direct Preference Optimization (\texttt{DPO}) algorithm~\citep{DPO}. \texttt{DPO} proceeds by leveraging the closed-form solution of the RLHF objective and using the preference dataset to align the model, thus obviating the explicit reward-learning, as well as the need to sample new responses during training. Since then, numerous extensions and successors have been proposed, e.g.,~\texttt{IPO}~\citep{IPO} and \texttt{CDPO}~\citep{CDPO}, underscoring the need for a deeper investigation into this emerging class of algorithms to connect the underlying principles.

In this paper, we start with a counter-intuitive observation about \texttt{DPO}: the \texttt{DPO} loss can be derived from an alternative optimization problem that imposes the KL penalty only on in-sample responses - those present in the preference dataset - rather than on the full output distribution, as done in traditional RLHF. We show that this subtle shift has a significant implication: the alternative optimization problem incentivizes in-sample probability reduction in DPO. We formally prove that under the optimal solution to this new problem, both preferred and rejected responses tend to decrease in probability. This phenomenon, while counter-intuitive, mirrors recent findings about DPO behavior (e.g.,~\citealt{nemotron,ppo_vs_dpo,smaug,distilled_DPO,xiao2024caldpo,AIPO,sppo,APO}), and is referred to as {\em likelihood displacement} by~\cite{razin2025unintentional}. We then show that the above in-sample probability reduction phenomenon is {\em shared among DPO extensions/successors} by developing a simple classification framework that unifies the family of DPO-style algorithms. %Using the proposed classification framework, we show that the problem of loss-minimization in DPO-style algorithms is under-specified, providing another explanation for the counter-intuitive likelihood displacement phenomenon. 

%We then present a classification framework that unifies DPO-style algorithms showing that the absence of explicit out-of-sample KL penalization carries over from \texttt{DPO} to its successors.

Leaning on these insights, we propose a family of constraints that provably control likelihood displacement in DPO-style algorithms. The constraints are designed to limit the movement of winner-loser probability mass between the reference and target policies. Our proposed algorithm, Constrained Controlled DPO (\texttt{C2-DPO}), optimizes the \texttt{DPO} objective under these constraints, has a meaningful RLHF interpretation, and requires no extra computation. We evaluate the effectiveness of our constraints in enhancing preference alignment across two datasets and three models with up to 13B parameters, and show that \texttt{C2-DPO} outperforms vanilla \texttt{DPO} and several other baselines, delivering higher-quality final models when assessed holistically on the standard MT-Bench dataset~\citep{mt_bench}.

\section{Preliminaries} \label{sec:prelim}
%Background on DPO-style Algorithms}%irect Preference Optimization}

We present the key ingredients of preference optimization on which we will build in the subsequent sections. In this setting, we are given a dataset $\mathcal{D}$ of triplets $(x , y_w , y_l)$, where $x$ is a prompt, while $y_w$ and $y_l$ reflect our preference in choosing response $y_w$ over $y_l$ conditioned on $x$. We are also given a reference policy $\piRef$ (often the SFT checkpoint $\pi_{\text{SFT}}$) which serves as a guardrail.

In RLHF, we first employ $\mathcal{D}$ to train a parameterized RM, $r_\phi$, and then use it to solve the following:
\begin{equation} \label{eq:standard_RLHF}
    \!\!\!\!\max_{\theta}\ \E{x}\Big[ \E{y \sim \pi_{\theta}}\big[r_\phi(x,y)\big] - \beta \mathbb{KL}\big(\pi_{\theta}(\cdot|x)||\piRef(\cdot|x)\big)\Big],
\end{equation}
where $\beta > 0$ is a hyper-parameter denoting the relative importance of reward maximization against ensuring a low deviation from $\piRef$. The RM is learned by minimizing the cross-entropy (CE) loss:
\begin{equation} \label{eq:reward-CE-loss}
%\mathcal L(\phi,\mathcal D) := - \hspace{-0.2in}\sum_{(x,y_w,y_l)\in\mathcal D} \hspace{-0.1in} \log\sigma\big(r_\phi(x,y_w) - r_\phi(x,y_l)\big)\;,
\min_\phi \!\! \sum_{(x,y_w,y_l)\in\mathcal D}  \hspace{-0.15in} -\log\sigma\big(r_\phi(x,y_w) - r_\phi(x,y_l)\big)\;,
\end{equation}
assuming that preferences follow the Bradley-Terry (BT) model: $p(y_w \succ y_l \mid x) = \sigma\big(r(x,y_w) - r(x,y_l)\big)\;$ where $\sigma(x) = 1/(1+\exp(-x))$ is the sigmoid function and $r$ is the latent reward of the annotator. Fine-tuning $\pi_\theta$ in the RLHF approach is split into two stages: reward-learning using the BT model, followed by a policy optimization using~\eqref{eq:standard_RLHF}. More recently, a family of algorithms have emerged that solve the above two problems in a single stage: Direct Preference Optimization (DPO)-style algorithms. The loss function of \texttt{DPO} is derived from the RLHF problem \eqref{eq:standard_RLHF} using the recipe from~\citep{DPO}. The key insight here is that problem \eqref{eq:standard_RLHF} admits the following closed-form solution: $
\pi^*(y | x) = \piRef(y | x)\exp\big(r(x,y)/\beta\big)/Z(x)\ $ with $Z(x)$ as the partition function. We can rewrite this as
\vspace{-5pt}
\begin{equation} \label{eq:standard_reward}
r(x,y) = \beta\log \frac{Z(x)\pi^{*}(y | x)}{\piRef(y | x)}\ .
%r(x,y) = \beta\log Z(x)g_{\pi_{\textrm{ref}}}^{\pi^{*}}(x , y)\ ,
\end{equation}
%
%where throughout the paper we denote $g_{\mu}^{\pi}(x,y) = \frac{\pi(y \mid x)}{\mu(y \mid x)}$ for two probabilities $\pi$ and $\mu$.
Substituting $r(x,y)$ from~\eqref{eq:standard_reward} into~\eqref{eq:reward-CE-loss}, the partition function $Z(x)$ cancels out, leading to the optimization problem solved by \texttt{DPO}:
\vspace{-5pt}
\begin{equation} \label{eq:DPO}
\min_{\theta} \sum_{(x, y_w, y_l) \in \mathcal{D}} -\log \sigma\left(\beta\log \frac{\pi_{\theta}(y_w | x)}{\piRef(y_w | x)} - \beta\log \frac{\pi_{\theta}(y_l | x)}{\piRef(y_l | x)}\right)\;. 
\end{equation}
%  \max_{\theta} \hspace{-0.1in}\sum_{(x, y_w, y_l) \in \mathcal{D}} \hspace{-0.2in}\log \sigma\Big(\beta\log \frac{\pi_{\theta}(y_w \mid x)}{\piRef(y_w \mid x)} - \beta\log \frac{\pi_{\theta}(y_l \mid x)}{\piRef(y_l \mid x)}\Big)\ .

%\section{Ineffectiveness of KL-Guardrailing in DPO}
\section{KL in DPO: Implicit Guardrailing with a Counter-intuitive Side Effect} \label{sec:KLGuard}

Recall from the RLHF problem \eqref{eq:standard_RLHF} that large deviations from $\piRef$ are penalized by KL, and note that the penalty is applied to the entire distribution $\pi_{\theta}(\cdot|x)$, not only to samples from the dataset $\mathcal{D}$. One may wonder whether the KL guardrail is maintained in \texttt{DPO}, given the equivalence between \texttt{DPO} and RLHF shown in~\cite{DPO}. 

We now show that the standard \texttt{DPO} loss~(\ref{eq:DPO}) can be obtained by applying guardrails only to in-sample responses, highlighting the fact that DPO does not explicitly enforce KL beyond the data it is trained on. To this end, we replace the KL-penalty in~\eqref{eq:standard_RLHF} with a similar penalty, but one that only operates on in-sample responses $S_{x} = \{ y_w, y_l \mid (y_w,y_l,x) \in \mathcal{D}\}$:
\begin{equation} \label{eq:non-standard_RLHF}
    \!\!\!\!\max_{\theta}\ \E{x}\Big[ \E{y \sim \pi_{\theta}}\big[r_\phi(x,y)\big] - \beta\!\sum_{\textcolor{red}{y \in S_{x}}} \pi_{\theta}(y | x)\log \frac{\pi_{\theta}(y | x)}{\piRef(y | x)} \Big]\ .
\end{equation}
When comparing to the penalty term in~\eqref{eq:non-standard_RLHF}, recall that $\mathbb{KL}\big(\pi_{\theta}(\cdot|x)||\piRef(\cdot|x)\big) := \sum_{\textcolor{red}{y }} \pi_{\theta}(y | x)\log \frac{\pi_{\theta}(y | x)}{\piRef(y | x)}$.

% let us replace the KL-penalty by an alternative penalty defined next. Recall that $\mathbb{KL}\big(\pi_{\theta}(\cdot|x)||\piRef(\cdot|x)\big)= \sum_{\textcolor{red}{y }} \pi_{\theta}(y | x)\log \frac{\pi_{\theta}(y | x)}{\piRef(y | x)}$. We now replace $ \mathbb{KL}$ with a similar penalty, but one that only operates on in-sample responses $S_{x} = \{ y_w, y_l|(y_w,y_l,x) \in \mathcal{D}\}$, leading us to:
% \begin{equation} \label{eq:non-standard_RLHF}
%     \!\!\!\!\max_{\theta}\ \E{x}\Big[ \E{y \sim \pi_{\theta}}\big[r_\phi(x,y)\big] - \beta\!\sum_{\textcolor{red}{y \in S_{x}}} \pi_{\theta}(y | x)\log \frac{\pi_{\theta}(y | x)}{\piRef(y | x)} \Big]\ .
% \end{equation}

We now prove that starting from problem~\eqref{eq:non-standard_RLHF} and following a recipe similar to the one in~\cite{DPO} described in Section~\ref{sec:prelim}, we ultimately arrive at the same standard \texttt{DPO} loss~\eqref{eq:DPO}. We report the proof of the following lemma in Appendix~\ref{Appendix_Idan}.

\begin{lemma} \label{L:DPO_non_standard}
Problem~\eqref{eq:non-standard_RLHF} has a closed-form solution. While this closed-form solution is different than the closed-form solution to problem~\eqref{eq:standard_RLHF}, substituting it into the BT model and following the recipe of~\cite{DPO} leads to the same standard \texttt{DPO} loss in~\eqref{eq:DPO}.
\end{lemma}

At first glance, the two optimization problems~\eqref{eq:standard_RLHF} and \eqref{eq:non-standard_RLHF} look quite similar, but finding the closed-form solution of~\eqref{eq:non-standard_RLHF} requires a delicate analysis. In particular, the new penalty is only summing over the set $S_x$, so the resultant KKT optimality conditions are more involved (details in Appendix~\ref{Appendix_Idan}).

Note that the introduced penalty in \eqref{eq:non-standard_RLHF} is not even a proper divergence, in the sense that it could be negative. Thus, the lemma could be viewed as evidence that when we move to \texttt{DPO}, we no longer enforce the $\mathbb{KL}$ penalty explicitly. However, empirically we still see that DPO can effectively guardrail using KL since in practice, using larger values of $\beta$ results in smaller KL deviations during \texttt{DPO} training (see Figure~\ref{fig:KL}-Left). This arguably makes sense, because in DPO there is little incentive for the model to shift probability mass for responses that are quite different from in-sample responses, and so explicit out-of-sample guardrailing may not be necessary.

\begin{figure*}[h]  % 't' places it at the top of the column
   \centering
   \begin{subfigure}{\textwidth}
       \centering
       \includegraphics[width=.4\textwidth]{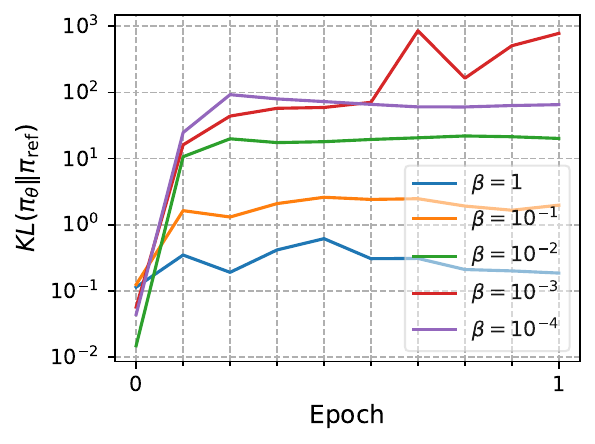}
       \hspace{0.125\textwidth}
       \includegraphics[width=.4275\textwidth]{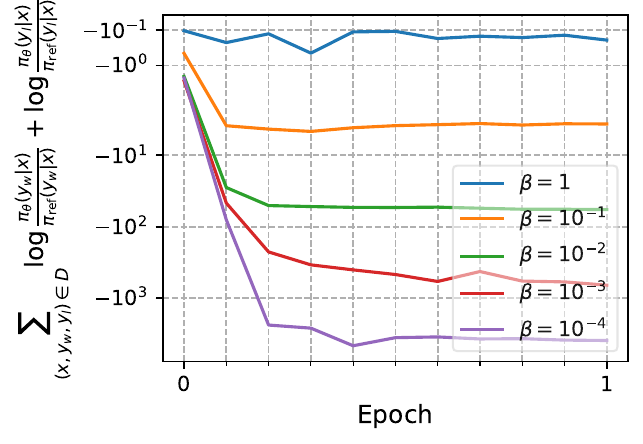}
   \end{subfigure}
   \caption{{\bf Left:} Increasing $\beta$ in DPO leads to effective out-of-sample guardrailing. We ran DPO with different values of $\beta$ starting from the Zephyr-7B initial checkpoint on the UltraFeedback dataset. We then estimated the KL divergence between $\pi_{\theta}$ and $\piRef$ by autoregressively sampling $N=32$ responses from $\pi_{\theta}$ for each prompt in the test set, followed by computing $\frac{1}{N}\sum_{i=1}^{N} \log \frac{\pi_{\theta}(y_i|x)}{\piRef(y_i|x)}$ and averaging over prompts. {\bf Right:} Reduction of in-sample probabilities in DPO training.}
   \label{fig:KL}
\end{figure*}

While this minimal guardrailing is effective, we show that it nevertheless leads to counter-intuitive behavior. Notice again that the new penalty term in \eqref{eq:non-standard_RLHF} can be negative, in sharp contrast to the original KL term, which is non-negative by definition. More importantly, the overall objective in~\eqref{eq:non-standard_RLHF} can be increased by decreasing the probability of both winner and loser responses, making the new penalty term negative, i.e.,~$\log \pi_\theta(y \mid x)/\pi_{\text{ref}}(y \mid x) < 0, \; \forall y \in S_x$. Note that this behavior is not incentivized in the original optimization problem~\eqref{eq:standard_RLHF}. We know that the KL penalty in~\eqref{eq:standard_RLHF} is always non-negative, so even if we reduce the in-sample portion of the KL by decreasing in-sample probabilities, the out-of sample portion must get more and more positive, and so there is no point in blindly reducing the in-sample probabilities. We now formalize our intuitions: 

\begin{lemma}\label{lemma: wierd e^-1 phenomenon}
Let $(x,y)$ be an in-sample prompt-response pair, i.e.,~$x\in\mathcal D$ and $y\in S_x$. Suppose that $r_{\phi}(x , y) \leq \max_{y' \notin S_{x}} r_{\phi}(x,y')$. Then, any optimal solution $\theta$ to the optimization problem \eqref{eq:non-standard_RLHF} satisfies $\pi_{\theta}(y | x) \leq e^{-1} \piRef(y | x)$. 
\end{lemma}

%The proof is in Appendix~\ref{Appendix_Idan}. 
The proof, reported in Appendix~\ref{Appendix_Idan}, hinges on the fact that the penalty term in \eqref{eq:non-standard_RLHF} is defined on in-sample responses. Therefore, this result does not hold when solving problem~\eqref{eq:standard_RLHF}, indicating that the standard RLHF formulation is not susceptible to this counter-intuitive behavior. To better understand the result, note that for a prompt $x$, if the reward of an in-sample response (whether the response is preferred or rejected) is smaller than the maximal reward of out-of-sample responses, then any optimal solution of \eqref{eq:non-standard_RLHF} decreases the probability of this in-sample response. Clearly the size of the in-sample response set $S_x$ is relatively small in comparison to the rest of the set (in the extreme case, a single preferred and rejected response), so the condition is likely to hold.

Interestingly, it has been observed recently that during DPO training all in-sample probabilities - even those associated with preferred responses - tend to decrease in magnitude (e.g.,~\citealt{nemotron,ppo_vs_dpo,smaug,distilled_DPO,xiao2024caldpo,NEURIPS2024_fa69e968,dposparse,dpotodo,yuzi2025identifying,razin2025unintentional,dpoless,dposhift,huang2025correcting,dpobalancing}). We also observe this clear trend in our experiments, as is apparent in Figure~\ref{fig:KL} (Right) where we show that the sum of log ratios ($\log \frac{\pi_{\theta}(y_w|x)}{\piRef(y_w|x)} + \log \frac{\pi_{\theta}(y_l|x)}{\piRef(y_l|x)}$) tends to decrease radically. Lemma~\ref{lemma: wierd e^-1 phenomenon} 
hints at the underlying reason for this counter-intuitive behavior, which has been referred to as {\em likelihood displacement} of in-sample probabilities~\citep{razin2025unintentional}. To the best of our knowledge, while this has been reported in previous empirical studies, Lemma~\ref{lemma: wierd e^-1 phenomenon} is among very few theoretical results explaining this counter-intuitive phenomenon.

We conclude this section by noting that we proved in-sample probability reduction for DPO alone, and so it would be natural to ask if some of its main successors, e.g.,~\texttt{IPO}~\citep{IPO} and \texttt{CDPO}~\citep{CDPO}, share the same property. In the next section, we answer this question affirmatively by developing a classification framework that unifies the family of DPO-style algorithms.
\section{A Classification View of DPO-style Algorithms}
\label{sec:PO-C}

We now show that DPO-style algorithms can be interpreted as classification methods, where the objective is defined solely over in-sample responses. As a result, similar to DPO, none of these algorithms applies any direct guardrailing on out-of-sample responses. Recall that the standard classification setting has three main ingredients.

\textbf{First}, we construct a hypothesis space by defining probabilities assigned to each class. In DPO-style algorithms, these probabilities are implicitly defined as:
\begin{equation}
p_{\theta}(x,y_w,y_l) \assign \textrm{softmax}\big(  r_{\theta}(x , y_w) , r_{\theta}(x , y_l)\big)\ ,
\label{eq:softmax_binary}
\end{equation}
where $r_{\theta}$ is the reward defined in \eqref{eq:standard_reward} having substituted $\pi^{*}$ with $\pi_{\theta}$. Under $p_{\theta}$, the probability assigned to the {\em winner (preferred)} response $y_w$, denoted by $p^{ w}_{\theta}$, does not depend on the partition function $Z(x)$ and can be written as:\vspace{-10pt}
\begin{equation} \label{pthetaw}
    p^{w}_{\theta}(x,y_w,y_l) := \frac{\big(\frac{\pi_{\theta}(y_w \mid x)}{\piRef(y_w \mid x)}\big)^{\beta}}{\big(\frac{\pi_{\theta}(y_{w} \mid x)}{\piRef(y_{w} \mid x)}\big)^{\beta} + \big(\frac{\pi_{\theta}(y_{l} \mid x)}{\piRef(y_{l} \mid x)}\big)^{\beta}} \ .
\end{equation}
%
%Whenever it is clear from the context, we will write $p^{w}_{\theta}$ instead of $p^{w}_{\theta}(x,y_w,y_l)$.
%The distribution $p_\theta$ in~\eqref{pthetaw} depends on $\pi_\theta$, $\beta$, and $\piRef$, and given that the answer must be in $\{y_w, y_l\}$.
%This distribution could be viewed as the probability of $y_w$ under the model $\pi_\theta$, $\beta$, and $\piRef$, and given that the answer must be in $\{y_w, y_l\}$. 
The probability assigned to the {\em loser (rejected)} response $y_l$, denoted by $p^{l}_{\theta}$, can be defined similarly. %The distribution $p_\theta=(p_\theta^w,p_\theta^l)$ is over two responses $\{y_w, y_l\}$, and depends on $\pi_\theta$, $\beta$, and $\piRef$. 
Note that the distribution $p_\theta$ in~\eqref{eq:softmax_binary} can be thought of as a generalization of the conditional probability of a response $y$ given that $y \in \{y_w, y_l\}$.

\textbf{Second}, in the standard classification setting, 
the dataset gives us access to labels, which we use to extract target probabilities associated with each class. To obtain these target probabilities, we simply use any distribution $p=(p^w , p^l)$ from the simplex $\Delta_{2}$, which is defined as the set of all vectors $p\in\mathbb{R}^{2}$ satisfying $p^w,p^l \geq 0$ and $p^w + p^l =1$. In the most basic case, we just use the one-hot vector $(p^w , p^l) = (+1 , 0)$ akin to using {\em hard labels}. More generally, we can use {\em soft labels}, meaning that we put some non-zero weight behind each class~\citep{softLabels}.

\textbf{Third}, we define a classification loss $\mathcal L$ between two distributions $p_{\theta}$ and $p$, leading us to the optimization problem: $\min_{\theta} \sum_{\mathcal{D}}{\mathcal L}(p_{\theta},p)$. A good example is the CE loss.

We can now show that a large number of DPO-style algorithms can be viewed as specific instances of this classification framework. The generality arises from the ability to use {\bf (a)} hard or soft labels for the target distribution $p$ and {\bf (b)} different classification losses $\mathcal{L}$.
\begin{remark}[\texttt{DPO}]
Suppose that we use the CE loss and hard labels $p \assign (p^{w} , p^{l}) = (+1 , 0)$ in the above classification framework. Then, using \eqref{pthetaw}, we can write
\begin{equation*}
{\mathcal L}\big(p_{\theta},\ p\big)\! =\! -\left(p^w \log p^{w}_{\theta} + p^l \log p^{l}_{\theta}\right)\! =\! -\log p^{w}_{\theta} = {\displaystyle -\log \sigma\left(\beta\log \frac{\pi_{\theta}(y_w | x)}{\piRef(y_w | x)} - \beta\log \frac{\pi_{\theta}(y_l |x)}{\piRef(y_l | x)}\right)}\ ,
\end{equation*}
which is exactly the \texttt{DPO} loss~\eqref{eq:DPO} if it is summed over $\mathcal{D}$.
\end{remark}

Another popular DPO-style algorithm is \texttt{IPO}~\citep{IPO}. While the derivation of \texttt{IPO} in the original paper looks completely different than \texttt{DPO}, we now show that \texttt{IPO} can also be viewed as a specific instance of our classification framework (see Appendix~\ref{Appendix:IPO} for the detailed derivation).

\begin{remark}[\texttt{IPO}]
We can recover \texttt{IPO} (Eq.~17 in~\citealt{IPO}) using the loss $\mathcal{L}\big(p_{\theta},\ p\big) = \big(\log (p^{w}_{\theta}/p^{l}_{\theta}) - \log (p^{w}/p^{l})\big)^2$ and soft labels $p \assign (p^{w}, p^{l}) = (\sigma(1/2) , \sigma(-1/2))$. %See Appendix~\ref{Appendix:IPO} for the detailed derivation.
\end{remark}

Table~\ref{table_example_algorithms} shows that several DPO-style algorithms can be formulated using this framework. Given this framework, we can formulate the set of optimal solutions (those that achieve 0 loss) for any DPO-style algorithm. An optimal parameter $\theta$ is one that achieves $0$ loss for all samples in $\mathcal D$, i.e.,~$p_{\theta}(x,y_w,y_l) = p, \forall(x,y_w,y_l)\in\mathcal D$. Setting $p_{\theta}^w(x,y_w,y_l)$ in~\eqref{pthetaw} equal to $p^w=1-\varepsilon$, we obtain \vspace{-5pt}
\begin{equation} \label{DPO_solution_characterization}
    \pi_{\theta^{*}}(y_w | x) = \eta\cdot \pi_{\theta^{*}}(y_l | x), \quad\; \textrm{with}\;\;\; \eta \assign \sqrt[\beta]{(1 - \varepsilon)/\varepsilon}\cdot\frac{\piRef(y_w | x)}{\piRef(y_l | x)} \ .
\end{equation}
Note that the derivation using $y_l$ yields the same result. Thus, we have two probabilities, $\pi_{\theta^{*}}(y_w | x)$ and $\pi_{\theta^{*}}(y_l | x)$, that we aim to learn, but minimizing the loss only gives us one constraint specified in~\eqref{DPO_solution_characterization}. This means that the original classification problem (loss-minimization in DPO-style algorithms) is under-specified. %\footnote{Note that for any $\theta$, we have $\pi_\theta(y_w|x)+\pi_\theta(y_l|x)+\sum_{y\in\mathcal Y-\{y_w,y_l\}}\pi_\theta(y|x)=1$, where $\mathcal Y$ is the set of all responses.} 
In Figure~\ref{solution_set}, we provide an illustration of this phenomenon. We also give a concrete example in Appendix~\ref{appendix_concrete_example} in which the two probabilities can move in arbitrary directions, and most notably, they can both go to zero.

\begin{table}[t]
    \centering
    \begin{minipage}{0.7\textwidth}
    \begin{tabular}{|l||c||c|c|}
        \hline
        Data & Algorithm & Labels & Loss \\ \hline\hline
        \multirow{3}{*}{Pairs} & \texttt{DPO(BT)} & Hard & CE\\ \cline{2-4}
         & \texttt{CDPO} & Soft & CE \\ \cline{2-4}
         & \texttt{IPO} & Soft & Remark 4.2 \\ \hline\hline
        List (Appendix.~\ref{Appendix:Algo}) & \texttt{DPO(PL)} & Hard & CE \\ \hline\hline
         Auxiliary Info & \texttt{RPO} & Soft & Appendix~\ref{Appendix:Algo} \\ \cline{2-4}
         (Appendix.~\ref{Appendix:Algo}) & \texttt{Distilled DPO} & Soft & Appendix~\ref{Appendix:Algo} \\ \hline
    \end{tabular}
    \end{minipage}%
    \begin{minipage}{0.29\textwidth}
    \caption{Unifying DPO-style algorithms. By \texttt{DPO(PL)}, we mean \texttt{DPO} with Plackett-Luce model. See our proofs for \texttt{RPO}~\citep{nemotron} and \texttt{Distilled DPO}~\citep{distilled_DPO} in Appendix \ref{Appendix:Algo}.}    \label{table_example_algorithms}
    \end{minipage}
\end{table}

\begin{figure}[t]
    \centering
    \begin{minipage}{0.35\textwidth}
    \includegraphics[width=\textwidth]{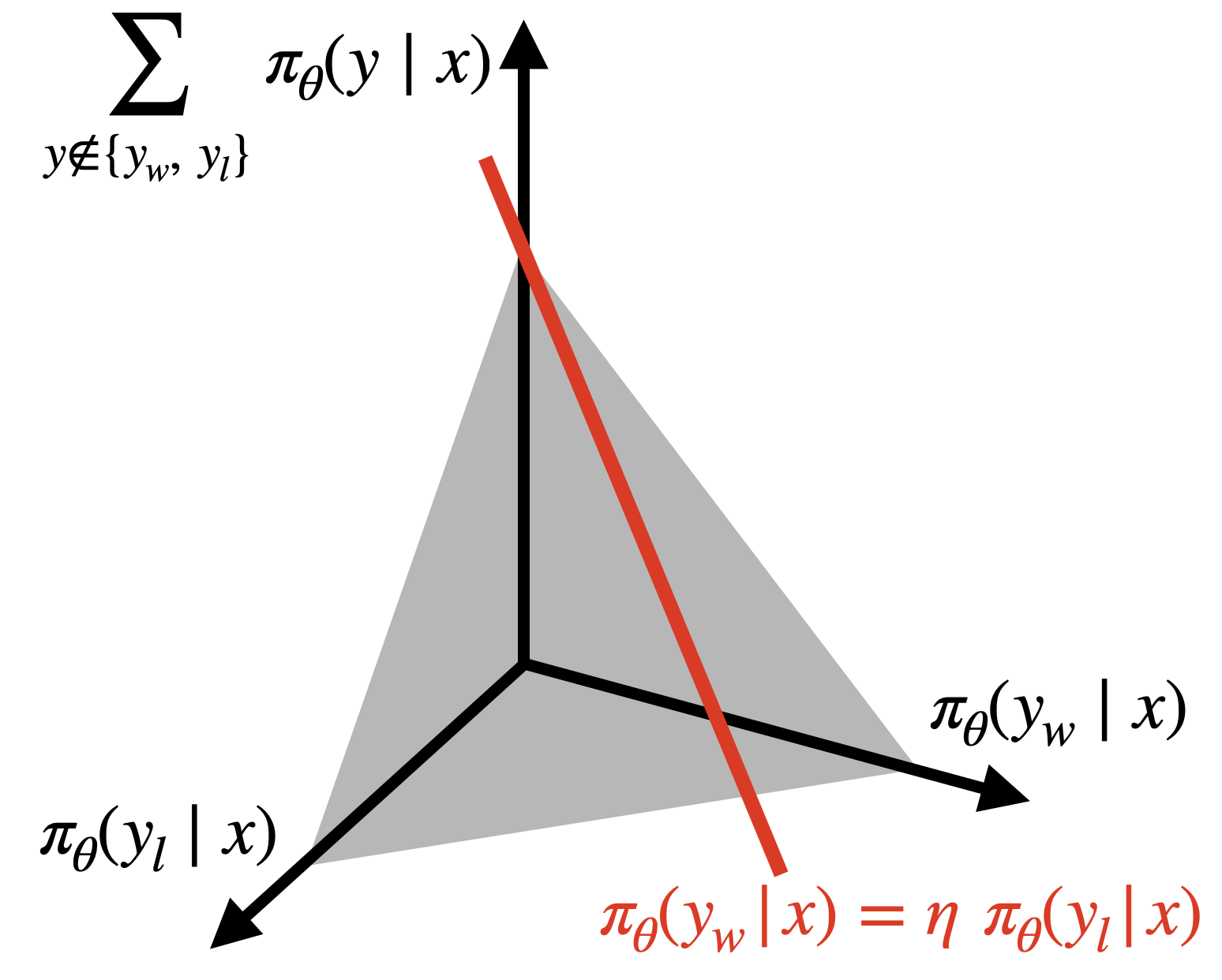}
    \end{minipage}%
    \hspace{0.06\textwidth}
    \begin{minipage}{0.5\textwidth}
    \caption{An illustration of the set of solutions that achieve $0$ loss in DPO-style algorithms. The shaded gray is the set of feasible solutions. The red line passing through the feasible set indicates the set of optimal solutions. Note that the case where the probability belonging to in-sample responses displace entirely to out-of-sample responses, i.e.,~$p_{\theta}(x, y_w, y_l)=( 0^{+}, 0^{+} )$ also lies on this line.}
    \label{solution_set}
    \end{minipage}
\end{figure}

\section{Constrained Controlled DPO (\texttt{C2-DPO})}
\label{sec:algos} 

We now present a general family of constraints to control the likelihood displacement of in-sample probabilities in DPO-style algorithms described in Section~\ref{sec:KLGuard}. These constraints can also help with the under-specified nature of these algorithms discussed in Section~\ref{sec:PO-C}. We define the constraints on the probability mass of the winner-loser pair and use them to control how much this mass changes from the reference policy $\piRef$ to the target policy $\pi_\theta$. We then show how these constraints can be incorporated into any DPO-style loss function and propose our algorithm, which we refer to as Constrained Controlled Direct Preference Optimization (\texttt{C2-DPO}).  

The constraint, with respect to an arbitrary function $\varphi : \mathbb{R} \rightarrow \mathbb{R}$, takes the general form
\begin{equation} \label{P:Cons}
\varphi\big(\pi_{\theta}(y_w|x)\big) + \varphi\big(\pi_{\theta}(y_l|x)\big) = \varphi\big(\piRef(y_w|x)\big) + \varphi\big(\piRef(y_l|x)\big)\ . 
\end{equation} 
%
%To gain some intuition, suppose that $\varphi(x) =x$. 
Note that the RHS is fixed during training, so the two terms on the LHS cannot move in the same direction. We now generalize this intuition by showing that when the constraint function $\varphi$ is monotonic and added to the solution characterization of DPO-style algorithms given by
\vspace{-5pt}
\begin{equation} \label{DPO_solution_characterization1}
\frac{\pi_{\theta^{*}}(y_w|x)}{\piRef(y_w|x)} = \sqrt[\beta]{\frac{1-\varepsilon}{\varepsilon}} \cdot \frac{\pi_{\theta^{*}}(y_l|x)}{\piRef(y_l|x)}\ ,    
\end{equation}
then we can control the direction of the movement of probability mass for all winner-loser pairs.

%$\pi_{\theta^{*}}(y_w | x) = \eta\cdot \pi_{\theta^{*}}(y_l | x)\ $ with $\eta \assign \sqrt[\beta]{(1 - \varepsilon)/\varepsilon}\cdot(\piRef(y_w | x)/\piRef(y_l | x))$, 

%\begin{proposition}
\begin{restatable}[]{proposition}{tech}
\label{prop:monotonic}
    Let $\varphi : \mathbb{R} \rightarrow \mathbb{R}$ be a monotonic function and assume that~\eqref{P:Cons} holds.
    %\begin{equation} \label{P:Cons}
    %    \varphi\big(\pi_{\theta}(y_w|x)\big) + \varphi\big(\pi(y_l|x)\big) = \varphi\big(\piRef(y_w|x)\big) + \varphi\big(\piRef(y_l|x)\big)\ .
    %\end{equation}
    Then, $\pi_{\theta^{*}}(y_w|x) > \piRef(y_w|x)$ and $\pi_{\theta^{*}}(y_l|x) < \piRef(y_l|x)$.
\end{restatable}    

Proposition~\ref{prop:monotonic}, whose proof is reported in Appendix~\ref{Appendix:C2-DPO-Proposition}, shows that any monotonic constraint function $\varphi$ guarantees that the learned policy, $\pi_{\theta^*}$, assigns a higher (lower) probability to the winner (loser) response than the one assigned to it by the reference policy $\piRef$. It is natural to ask what $\varphi$ should be used in the context of this constraint, and we present two interesting candidates below. 
\subsection{Logarithmic Constraint $\;\varphi(x) \assign \log x$}
\label{subsec:Log-constraint}
Our first choice is to employ the logarithmic constraint:
\begin{equation}
\log\left(\pi_{\theta}(y_w|x)\right) + \log\left(\pi_{\theta}(y_l|x)\right) =  \log\left(\piRef(y_w|x)\right) + \log\left(\piRef(y_l|x)\right)\ ,
\label{eq:constraint_phi_log}
\end{equation}
which is nice to work with empirically in light of the fact that all terms are in the log-space. Moreover, these log probabilities are already computed in \texttt{DPO}, which makes the implementation of the corresponding \texttt{C2-DPO} algorithm more efficient. 

Rather than using hard constraints, it is easier to compute the deviation from the constraint using either $\ell_1$ or $\ell_2$ norm, and then add it as a regularizer to the original DPO-style loss with a regularization parameter $\lambda$ that trades-off the relative importance of the two terms. 
%to add the deviation from the constraint to the loss as a regularizer using either $\ell_1$ or $\ell_2$ norm. We then add this regularizer to the original DPO-style loss with a hyper-parameter ($\mu$) that trades-off the relative importance of the two losses. 
Equipping the \texttt{DPO} loss~\eqref{eq:DPO} with the logarithmic constraint~\eqref{eq:constraint_phi_log}, we obtain the following loss for \texttt{C2-DPO}:
\begin{equation}
\!\!\!\!\min_{\theta}\sum_{\mathcal{D}} \! -\log \sigma\left(\beta\log\frac{\pi_{\theta}(y_w|x)}{\piRef(y_w|x)} \!-\! \beta\log\frac{\pi_{\theta}(y_l|x)}{\piRef(y_l|x)}\right) + \lambda\left(\log\frac{\pi_{\theta}(y_w|x)}{\piRef(y_w|x)} \!+\! \log\frac{\pi_{\theta}(y_l|x)}{\piRef(y_l|x)}\right)^{2}\!\!\!.
\label{eq:C2DPO-log}
\end{equation}
In contrast to the hard constraint, in this case we do not necessarily force the winner (loser) probability to go up (down). Rather, we impose a penalty when the learner violates the constraint. Notice also that we added the penalty term to the original loss of \texttt{DPO} for simplicity, but in principle, the penalty term can be added to any DPO-style loss covered in our classification framework.

Further, we can show that employing the logarithmic constraint~\eqref{eq:constraint_phi_log} has a meaningful RLHF interpretation. Recall that~\citet{DPO} defined $\hat r_{\theta}(x,y) := \beta\log\big(\pi_{\theta}(y|x)/\piRef(y|x)\big),\;\forall y\in\mathcal Y$ 
% %
% \begin{equation}
% \label{eq:implicit-reward}
%  \hat r_{\theta}(x,y) := \beta\log\big(\pi_{\theta}(y|x)/\piRef(y|x)\big),\;\;\;\forall y\in\mathcal Y,  
% \end{equation}
% %
as an {\em implicit reward} learned during DPO training. Using this notation, we can rewrite objective~\eqref{eq:C2DPO-log} simply as
\vspace{-10pt}
\begin{equation*}
-\log \sigma\big( \hat r_{\theta}(x , y_w) - \hat r_{\theta}(x , y_l)\big) + \frac{\lambda}{\beta^{2}}\big( \hat r_{\theta}(x , y_w) +  \hat r_{\theta}(x , y_l)\big)^{2}\ .
\end{equation*}
Under $\varphi(x) = \log x$, we solve the original RLHF problem akin to \texttt{DPO}, but we also incentivize the sum of the implicit rewards for the winner and loser to remain around zero. It follows that the constraint regularizes the implicit rewards so as to avoid rewards that are {\bf (a)} very large and {\bf (b)} have the same sign. These two properties cannot co-exist when employing $\varphi(x) = \log x$, since doing so would yield a large magnitude inside the square and ultimately a large magnitude in the second term of the loss. Intuitively, this can hedge against the likelihood displacement, because in the case of displacement both implicit rewards are large in magnitude and both have a negative sign, which the constraint will greatly penalize.

\subsection{Identity Constraint $\;\varphi(x) \assign x$}
\label{subsec:Identity-constraint}

A second interesting choice would be to simply use the identity constraint:
\begin{equation}
    \pi_{\theta}(y_w|x) + \pi_{\theta}(y_l|x)  = \piRef(y_w|x) + \piRef(y_l|x)\ .
    \label{eq:constraint_phi_identity}
\end{equation}
While~\eqref{eq:constraint_phi_identity} is also a plausible constraint, at first glance it is unclear how to implement it since the constraint is no longer in the log-space and is specified in terms of raw probabilities. Working with raw probabilities is prone to numerical underflow issues; thus, we would like to derive a constraint which is equivalent to~\eqref{eq:constraint_phi_identity} and operates in the log-space. To do so, we make use of the following lemma whose proof is reported in Appendix~\ref{Appendix:C2-DPO-Lemma}.

\begin{restatable}[]{lemma}{technical} \label{L:Technical}
For any two numbers $a$ and $b$, we have $\log(a + b) = \log a - \log\sigma(\log a - \log b)$. 
%where $\sigma$ is the sigmoid function $\sigma(x) = 1/(1+ \exp(-x))$.
\end{restatable}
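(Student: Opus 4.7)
The plan is to just unfold the definition of the sigmoid on the right-hand side, simplify using elementary properties of logarithms and exponentials, and then recognize the left-hand side. Note that although the lemma is phrased for ``any two numbers'' $a$ and $b$, the identity only makes sense when $a, b > 0$ so that the logarithms are defined; I would state this implicit assumption at the top of the proof.

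Concretely, I would first compute $\sigma(\log a - \log b)$ by writing
\[
\sigma(\log a - \log b) = \frac{1}{1 + e^{-(\log a - \log b)}} = \frac{1}{1 + b/a} = \frac{a}{a+b}.
\]
Taking logarithms of both sides then gives $\log \sigma(\log a - \log b) = \log a - \log(a+b)$. Rearranging this identity yields
\[
\log(a+b) = \log a - \log \sigma(\log a - \log b),
\]
which is exactly the claim.

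There is essentially no obstacle here: the whole argument is a two-line manipulation once the sigmoid is expanded. The only thing worth flagging in the writeup is the positivity assumption on $a$ and $b$, and perhaps a sentence noting that an entirely symmetric derivation gives the dual form $\log(a+b) = \log b - \log \sigma(\log b - \log a)$, which may be useful later when the lemma is applied to convert the identity constraint~\eqref{eq:constraint_phi_identity} into a form that lives in log-space.
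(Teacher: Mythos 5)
Your proof is correct and follows essentially the same route as the paper's: both reduce to the observation that $\sigma(\log a - \log b) = \frac{1}{1+b/a} = \frac{a}{a+b}$ and then apply elementary logarithm rules, with you merely working from the right-hand side backwards rather than decomposing $a+b$ first. Your explicit flagging of the implicit positivity assumption $a,b>0$ is a small improvement over the paper's ``any two numbers'' phrasing.
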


Applying $\log$ to both sides of~\eqref{eq:constraint_phi_identity}, we obtain  $\log(\pi_{\theta}(y_w|x) + \pi_{\theta}(y_l|x)) = \log(\piRef(y_w|x) + \piRef(y_l|x))$,
% %
% \begin{equation*}
% \log\big(\pi_{\theta}(y_w|x) + \pi_{\theta}(y_l|x)\big)  = \log\big(\piRef(y_w|x) + \piRef(y_l|x)\big)\ ,
% \end{equation*}
% %
which can be rewritten using Lemma~\ref{L:Technical} as
\begin{align}
\label{eq:v2}
\log\big(\pi_{\theta}(y_w|x)\big) - \log\sigma\left(\log\frac{\pi_{\theta}(y_w|x)}{\pi_\theta(y_l|x)}\right) =\log\big(\piRef(y_w|x)\big) - \log\sigma\left(\log\frac{\piRef(y_w|x)}{\piRef(y_l|x)}\right)\ .
\end{align}
Moving from~\eqref{eq:constraint_phi_identity} to~\eqref{eq:v2}, we have rewritten the constraint entirely in the log-space, thus avoiding numerical issues, and similar to the logarithmic constraint in Section~\ref{subsec:Log-constraint}, allowing a straightforward implementation of the corresponding \texttt{C2-DPO} algorithm. Equipping the \texttt{DPO} loss~\eqref{eq:DPO} with the logarithmic constraint~\eqref{eq:v2}, we obtain the following loss for \texttt{C2-DPO}: 
\begin{align}
% &\min_{\theta}\;\sum_{\mathcal{D}} \; -\log \sigma\big(\hat r(x,y_w) - \hat r(x,y_l)\big) \nonumber \\ 
% &\;\;+\lambda\left(\hat r(x,y_w)/\beta + \log\sigma\left(\log\frac{\piRef(y_w|x)}{\piRef(y_l|x)}\right) \right. \nonumber \\
% &\;\; \left. - \log\sigma\left(\log\frac{\pi_{\theta}(y_w|x)}{\pi_\theta(y_l|x)}\right)\right)^{2}\ .
&\min_{\theta}\;\sum_{\mathcal{D}} \; -\log \sigma\left(\beta\log\frac{\pi_{\theta}(y_w|x)}{\piRef(y_w|x)} - \beta\log\frac{\pi_{\theta}(y_l|x)}{\piRef(y_l|x)}\right) \nonumber \\ 
&\qquad\qquad +\lambda\left(\log\frac{\pi_{\theta}(y_w|x)}{\piRef(y_w|x)} + \log\sigma\left(\log\frac{\piRef(y_w|x)}{\piRef(y_l|x)}\right) - \log\sigma\left(\log\frac{\pi_{\theta}(y_w|x)}{\pi_\theta(y_l|x)}\right)\right)^{2}\ .
\label{eq:C-3DPO-log}
\end{align}
Note that unlike $\varphi(x)=\log x$, deriving an RLHF interpretation under $\varphi(x)= x$ is subtle. That said, an interesting property under $\varphi(x)= x$ is that the winner probability can increase only by an amount equal to $\piRef(y_l|x)$. This means that we will not put the entire probability mass on $y_w$.
\section{Experiments}
In this section, we present experiments on two data sets that demonstrate that \texttt{C2-DPO} outperforms vanilla \texttt{DPO} and several other baselines and delivers higher-quality final models when assessed holistically on the standard MT-Bench dataset~\citep{mt_bench}.

\begin{figure*}[h]
    \centering
    \begin{subfigure}[t]{0.24\textwidth}
        \centering
        \includegraphics[width=\linewidth, trim={0pt 15pt 0pt 0pt}, clip]{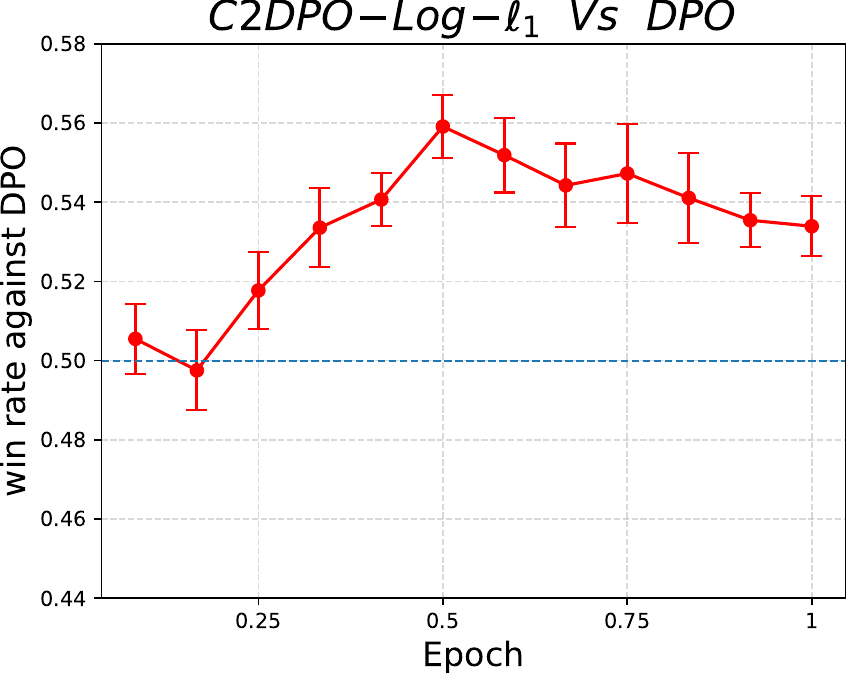}
    \end{subfigure}
    \hfill
    \begin{subfigure}[t]{0.23\textwidth}
        \centering
        \includegraphics[width=\linewidth, trim={0pt 15pt 0pt 0pt}, clip]{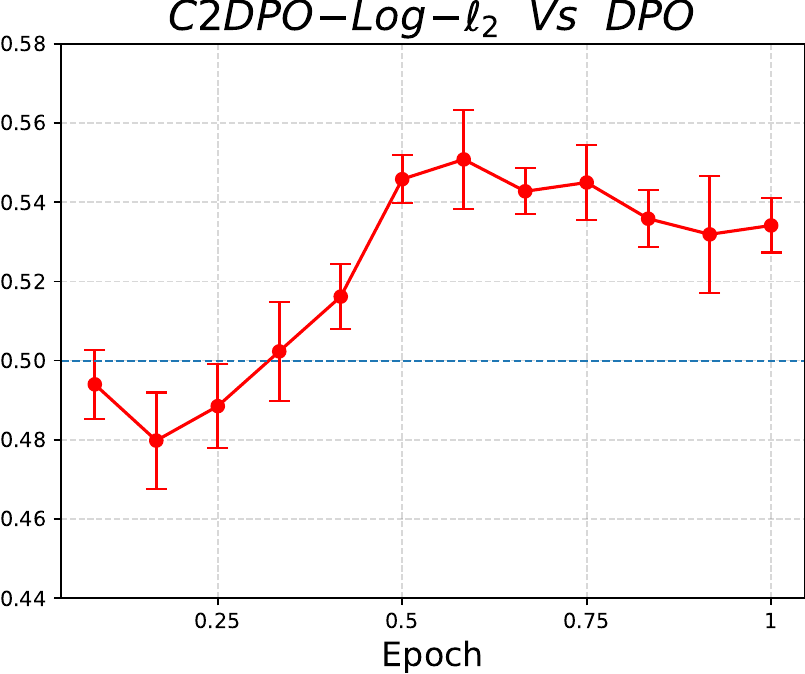}
    \end{subfigure}
    \hfill
    \begin{subfigure}[t]{0.23\textwidth}
        \centering
        \includegraphics[width=\linewidth, trim={0pt 15pt 0pt 0pt}, clip]{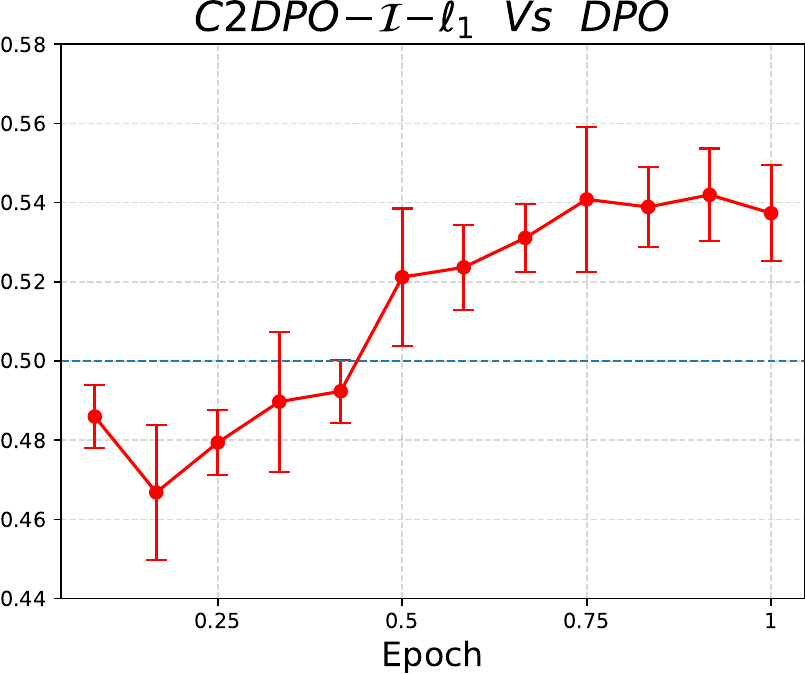}
    \end{subfigure}
    \hfill
    \begin{subfigure}[t]{0.23\textwidth}
        \centering
        \includegraphics[width=\linewidth, trim={0pt 15pt 0pt 0pt}, clip]{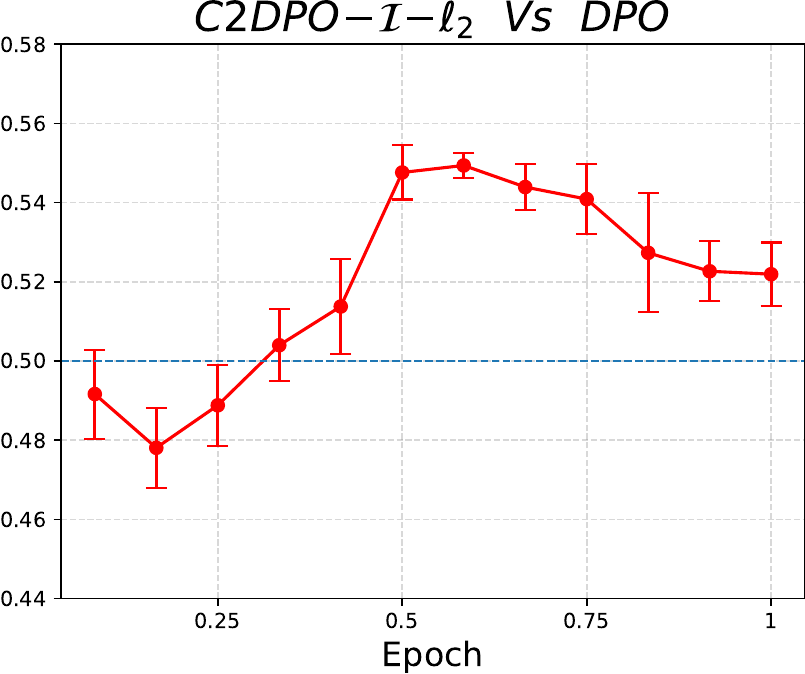}
    \end{subfigure}
    \caption{Head-to-head win-rate of \texttt{C2-DPO} against \texttt{DPO}, averaged over 10 random seeds.}
    \label{fig:four_side_by_side}
\end{figure*}

% \begin{figure*}[h]
%     \centering
%     \includegraphics[width=0.35\textwidth, trim={0pt 5pt 0pt 0pt}, clip]{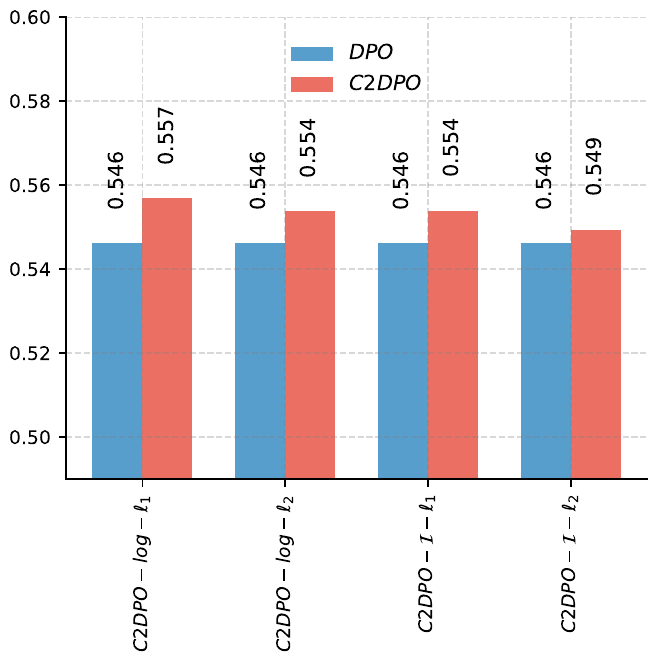}
%     \hspace{0.05\textwidth} % adjust spacing between images as needed
%     \includegraphics[width=0.3\textwidth, trim={0pt 5pt 0pt 0pt}, clip]{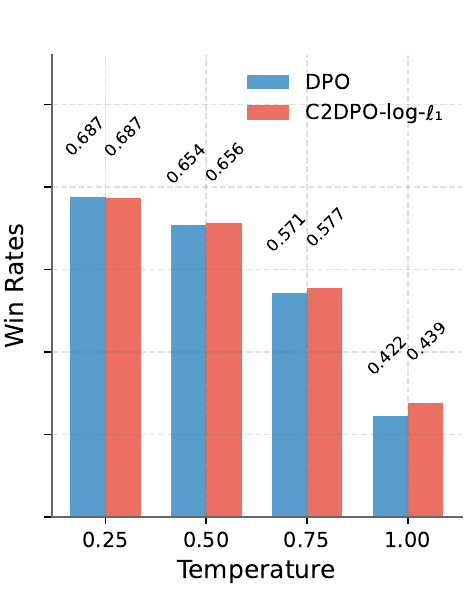}
%     \caption{\textbf{Left}: win-rate against the preferred response in the test set of the UltraFeedback dataset. \textbf{Right}: Win rates on the TL;DR dataset. See Appendix for more details.}
%     \label{fig:uf_and_tldr}
% \end{figure*}

\begin{figure*}[h]
    \centering
    \begin{minipage}{0.65\textwidth}
        \centering
        \includegraphics[width=0.48\textwidth, trim={0pt 5pt 0pt 0pt}, clip]{figures/uf_main.pdf}
        \hspace{0.02\textwidth}
        \includegraphics[width=0.45\textwidth, trim={0pt 5pt 0pt 0pt}, clip]{figures/tldr_main.pdf}
    \end{minipage}%
    \begin{minipage}{0.3\textwidth}
        \captionof{figure}{\textbf{Left}: win-rate against the preferred response in the test set of the UltraFeedback dataset. \textbf{Right}: Win rates on the TL;DR dataset. See Appendix for more details.}
        \label{fig:uf_and_tldr}
    \end{minipage}
\end{figure*}

\begin{figure*}[h]  % 't' places it at the top of the column
   \centering
   \begin{subfigure}{\textwidth}
       \centering
       \includegraphics[width=.85\textwidth]{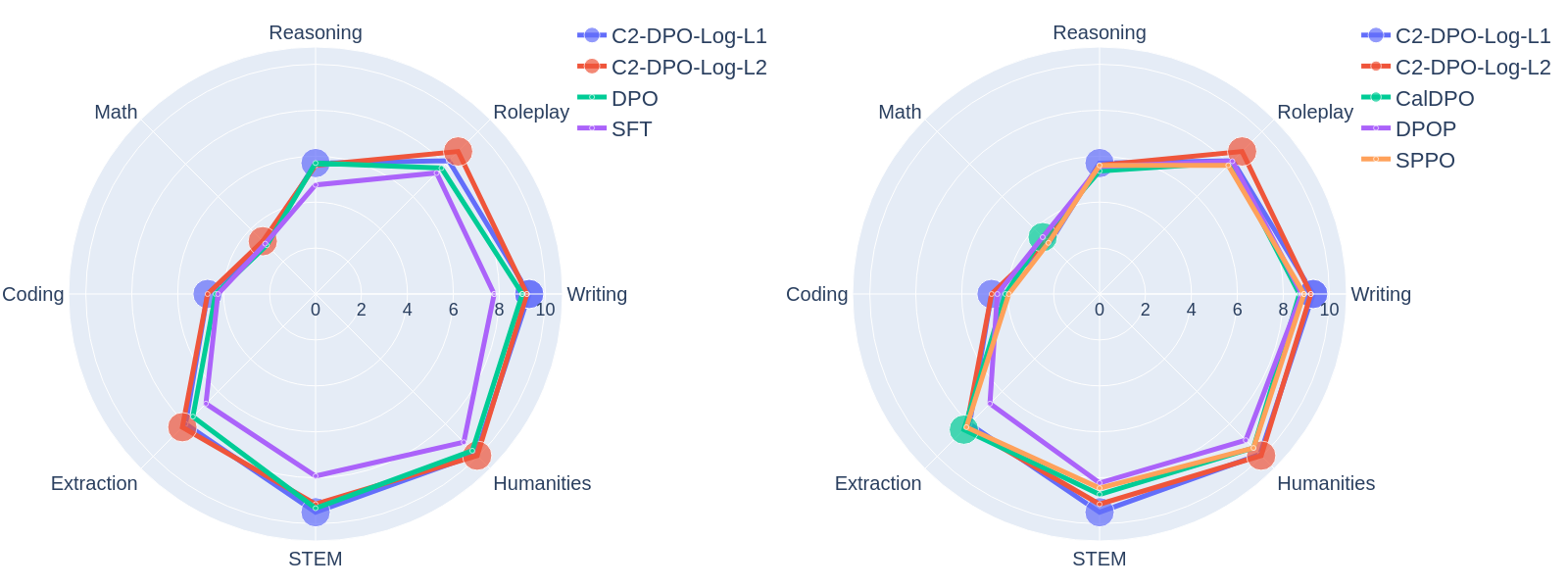} %{figures/radar_uf_7b/7b.pdf}
       %\caption{Subfigure 1}
   \end{subfigure}
   \caption{A comparison between \texttt{C2-DPO} and baselines. (Left) The two variants of \texttt{C2-DPO-Log} better align \textrm{Zephyr-7b-SFT} relative to vanilla \texttt{DPO} as well as the other baselines (Right).}
   \label{fig:uf_7b_radar}
\end{figure*}
%We now evaluate \texttt{C2-DPO} across 2 datasets and 3 initial checkpoints.
\subsection{\textrm{Ultrafeedback Binarized}}
This dataset is comprised of 64k prompts, and a winner (preferred) and loser (rejected) continuation per prompt ($y_w$ and $y_l$). We used the standard DPO implementation published with the paper~\citep{mitchell2023dpo_implementation}. Following~\citet{rasul2024preference}, we train \textrm{Zephyr-7B-SFT}. 
\begin{figure}[h]  % 't' places it at the top of the column
   \centering
   \begin{minipage}{0.6\textwidth}
   \begin{subfigure}{0.45\textwidth}
       \centering
       \includegraphics[width=\textwidth]{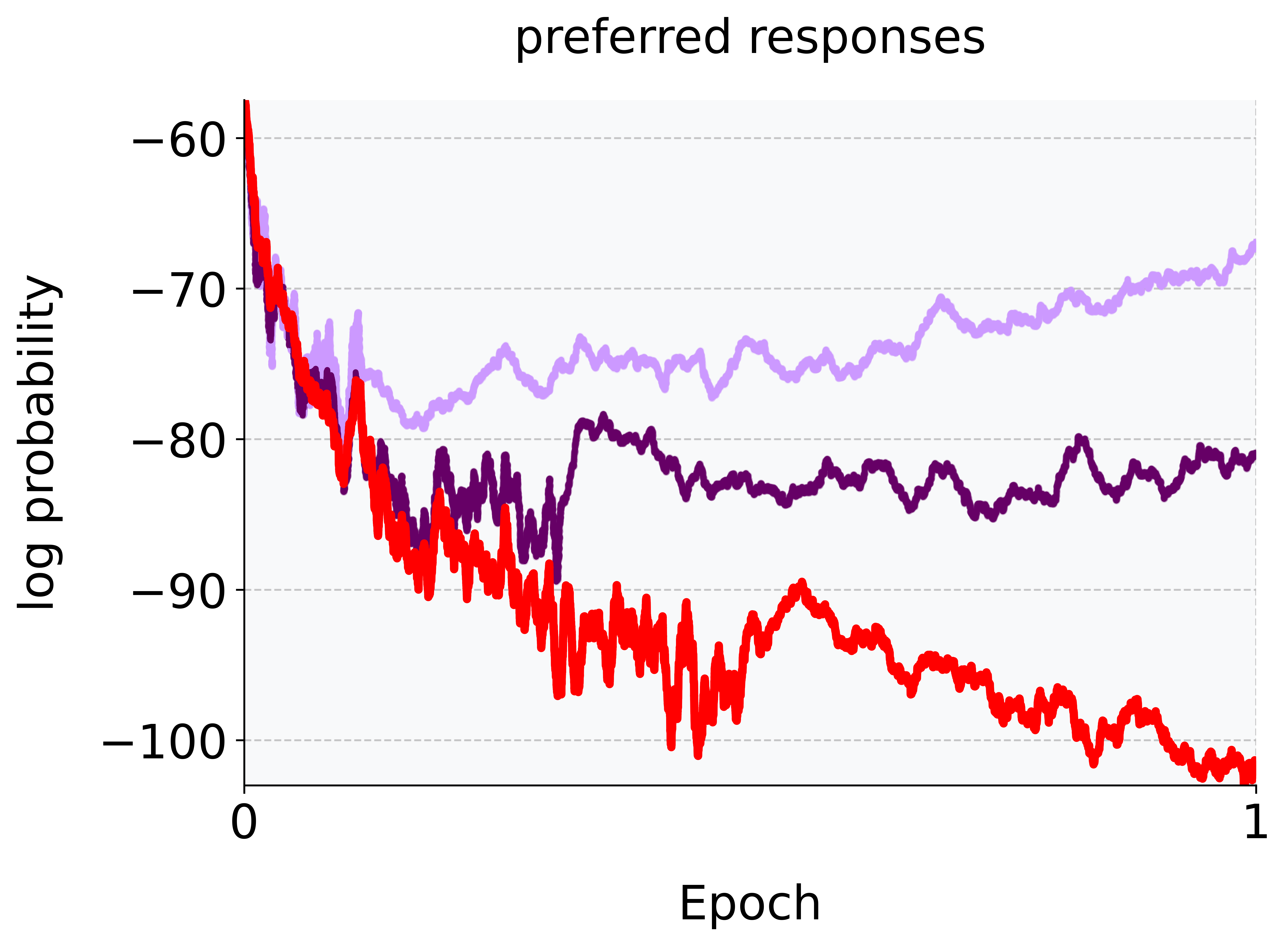}
       %\caption{Subfigure 1}
   \end{subfigure}
   \begin{subfigure}{0.45\textwidth}
       \centering
       \includegraphics[width=\textwidth]{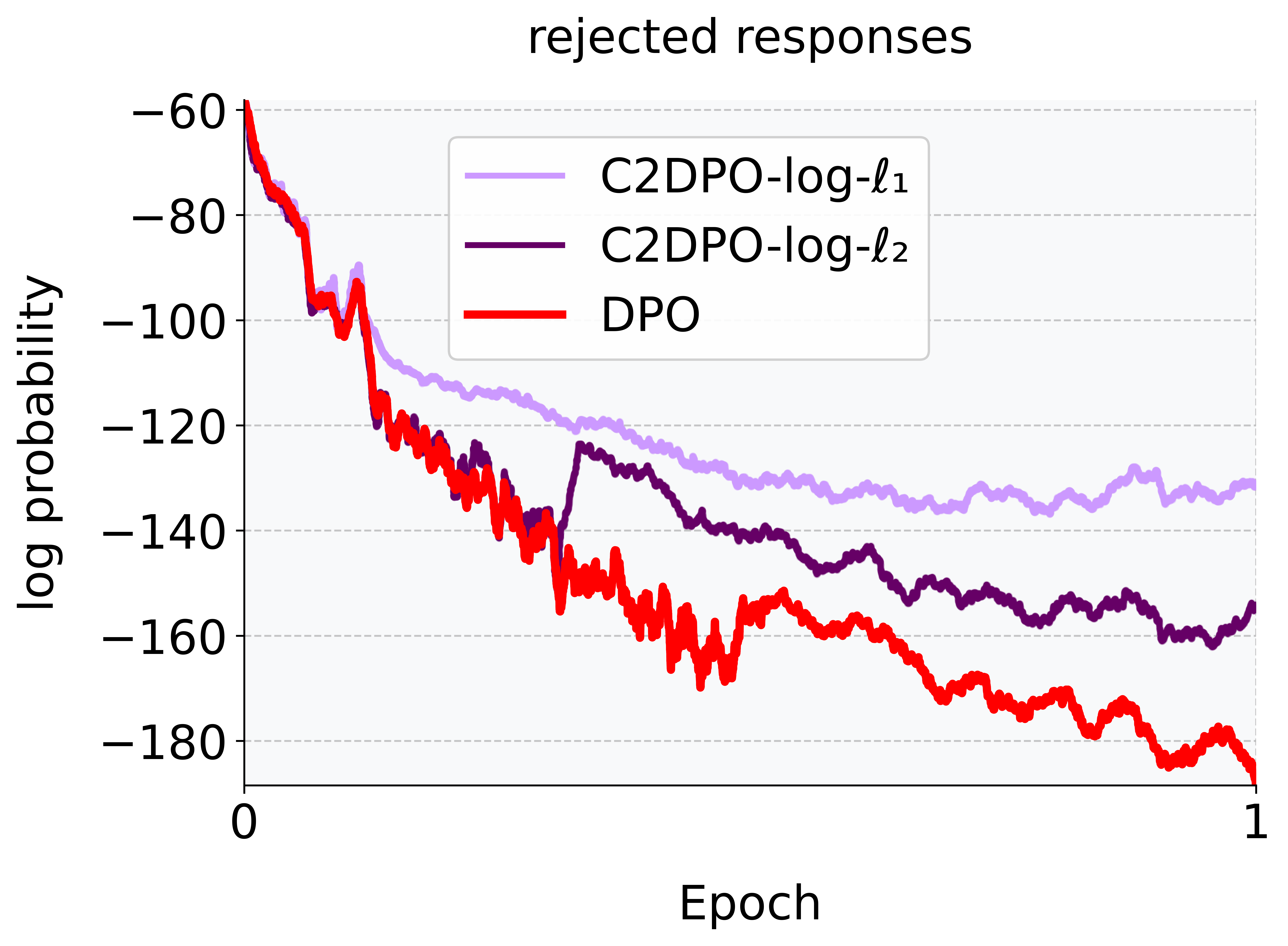}
       %\caption{Subfigure 2}
   \end{subfigure}
   \end{minipage}
   \begin{minipage}{0.39\textwidth}
   \begin{subfigure}{0.95\textwidth}
       \centering
       \includegraphics[width=\textwidth]{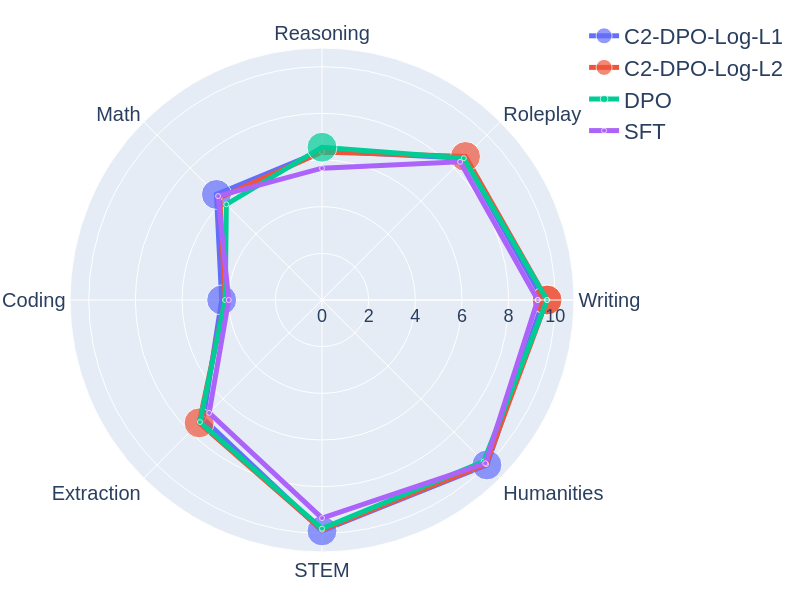} %{figures/radar_uf_13b/13b.pdf}
       %\caption{Subfigure 1}
   \end{subfigure}
   \end{minipage}
   \caption{\textbf{Left}: A comparison between \texttt{C2-DPO} and \texttt{DPO} in terms of the probability of winner and loser responses during training. \textbf{Right}: Comparison between \texttt{C2-DPO} and \texttt{DPO} for aligning the Olmo-13B-SFT model using the Ultrafeedback-Binarized dataset.}
   \label{fig:training_losses} 
\end{figure}

We present the head to head win rate of \texttt{C2-DPO} against \texttt{DPO} by using 200 prompts from the held-out test set. We compute the win-rate by asking Anthropic's Claude Sonnet-3.5-v2 which response is more helpful. The exact prompt used for Claude is in Appendix~\ref{Appendix:win_rates_prompt}.

% \begin{figure}[h]  % 't' places it at the top of the column
%    \centering
%    \begin{minipage}{0.5\textwidth}
%    \begin{subfigure}{\textwidth}
%        \centering
%        \includegraphics[width=\textwidth]{figures/radar_uf_13b/13b.pdf}
%        %\caption{Subfigure 1}
%    \end{subfigure}
%    \end{minipage}%
%    \begin{minipage}{0.5\textwidth}
%    \caption{}
%    \label{fig:uf_13b_radar}
%    \end{minipage}
% \end{figure}

Recall from Section~\ref{sec:algos} that we proposed two candidates for the constraint function $\varphi$, namely the logarithmic and identity functions. Moreover, notice from~(\ref{eq:C-3DPO-log}) that we measure the deviation from the constraint using an $\ell_2$ penalty. Alternatively, we can measure this deviation using an $\ell_1$ penalty. Altogether, we have four specific implementations of \texttt{C2-DPO}: \texttt{C2-DPO-Log-$\ell_1$}, \texttt{C2-DPO-Log-$\ell_2$}, \texttt{C2-DPO-$\mathcal{I}$-$\ell_1$}, and \texttt{C2-DPO-$\mathcal{I}$-$\ell_2$}. In Appendix~\ref{Appendix:c-3dpo_pseudo_code}, we include pseudo-code for each of these 4 variations. In Figure~\ref{fig:four_side_by_side}, we show the head to head win rate of each of these 4 implementations against \texttt{DPO}, and in Figure~\ref{fig:uf_and_tldr} (Left), we show win-rate against the preferred response from the dataset. For all 4 implementations, we used the hyper-parameter $\lambda=2\times 10^{-4}$ and did not tune it for each of the 4 implementations separately. From this result, it is clear that all 4 variations improve upon DPO, with \texttt{C2-DPO-Log} being the highest performer.

Further, to holistically evaluate the final model, ~\citet{rasul2024preference} used MT-Bench, a multi-turn benchmark that uses GPT-4 to judge models’ performance in $8$ different categories: Writing, Roleplay, Reasoning, Math, Coding, Extraction, STEM, and Humanities. In Figure~\ref{fig:uf_7b_radar} we show the MT-Bench evaluation for \texttt{C2-DPO} against vanilla \texttt{DPO}, as well as related DPO-style algorithms that discuss the collapse of probabilities in \texttt{DPO} and aim to mitigate it, such as \texttt{Cal-DPO}~\citep{xiao2024caldpo}, \texttt{SPPO}~\citep{sppo}, and \texttt{DPOP}~\citep{smaug}. \texttt{C2-DPO} is the most competitive algorithm.

We then used \texttt{C2-DPO-Log} on a larger initial checkpoint, namely the \textrm{Olmo-13B-SFT} model from AllenAI~\cite{olmo20242olmo2furious}. We compared \texttt{C2-DPO-Log} against \texttt{DPO}. In Figure~\ref{fig:training_losses}, we see that \texttt{C2-DPO-Log} outperforms \texttt{DPO}, indicating that \texttt{C2-DPO} improvement may scale to larger models.

\subsection{\textrm{Reddit TL;DR}}
We then evaluate our proposed method on a summarization task with human-assigned scores for pairs of summaries. For this purpose, we employ the \textrm{Reddit TL;DR} dataset from \citet{reddit}. We follow \citet{odpo} in creating the dataset and include pairs of summaries where one received a higher quality score than the other. During training and testing, we select the highest-scoring summary as $y_w$ and a randomly selected alternative as $y_l$.

We align GPT-J \citet{gpt-j} with vanilla \texttt{DPO} as well as \texttt{C2-DPO} algorithms. Specifically, we first run one SFT epoch with the \textrm{Reddit TL;DR} dataset on GPT-J, and then perform subsequent preference alignment. We evaluate the final checkpoints by computing win rates against $y_w$ following \citet{DPO, odpo}. We use Claude Sonnet 3.5 v2 as a judge, and provide prompt used for Claude in the Appendix \ref{Appendix:win_rates_prompt_tldr}. As shown in Figure \ref{fig:training_losses}, \texttt{C2-DPO} improves upon \texttt{DPO}.  Moreover we show the probability of individual $y_w$ and $y_l$ in Figure~\ref{fig:training_losses}.

\section{Related Work}
\label{sec:related-work}

A few recent papers have proposed a unifying perspective on DPO-style algorithms. Notably,~\citet{tang2024generalized} presented a generalization of \texttt{DPO} where different supervised learning losses are applied to the difference of implicit rewards $\big(\hat r(x,y_w) - \hat r(x,y_l)\big)/\beta$.  
%log ratio difference $\log\frac{\pi_{\theta}(y_w|x)}{\piRef(y_w|x)} - \log\frac{\pi_{\theta}(y_l|x)}{\piRef(y_l|x)}$. 
In contrast, we make a clear connection between DPO-style algorithms and classification, and also extend our results to lists and auxiliary information, as do similar efforts by~\citep{su2025reveal, zhao2025rainbow,im2024generalization,yao2025preference}. A second notable example was to generalize from KL to any f-divergence when measuring the discrepancy between the target and reference model~\citep{han2024f}. We also note that the first ingredient of our classification framework - $\pi_{\theta}$ - was used by~\citet{sharifnassab2024soft} to propose a soft version of \texttt{DPO}.

Earlier work studied the decrease of likelihood during training~\citep{feng2024towards, xie2024minor}, explained this phenomenon from different angles, and proposed different losses to address it. Here we provide a brief overview of a number of these results, especially those that we experimentally compare against. \citet{smaug} proposed \texttt{DPOP} which addresses the phenomenon by adding the penalty term $\max(0,-\hat r(x,y_w)/\beta)$ within the log-sigmoid of the \texttt{DPO} loss~\eqref{eq:DPO}. \texttt{DPOP} can also be viewed as \texttt{DPO} with a modified BT model. In this sense, it has similarities with the \texttt{$\alpha$-DPO}~\citep{AIPO} (see also \citep{choi2025self,shao2025earlier}).

\citet{xiao2024caldpo} attributed the undesirable behavior to the contrastive loss of \texttt{DPO} not being {\em scale-calibrated}, i.e.,~ignoring the absolute values of implicit rewards $\hat r(x,y_w)$ and $\hat r(x,y_l)$. They address this by constraining the implicit rewards to a scale that matches the ground-truth reward $r$. Thus, in their proposed loss, \texttt{Cal-DPO}, they add the square-loss $(\hat r(x,y) - r(x,y))^2,\;y\in\{y_w,y_l\}$ to the DPO loss (without $\beta$). Of course, since they do not have access to the ground-truth reward, they replace it with $1/2$ and $-1/2$ for $y_w$ and $y_l$, respectively. A loss similar to \texttt{Cal-DPO} was proposed by \citet{sppo} and they named it \texttt{SPPO}. It is simply \texttt{Cal-DPO} without the \texttt{DPO} loss.  Finally,~\citet{APO} proposed \texttt{APO}, which offer fine-grained control over the implicit rewards.
\section{Conclusion \& Future Work}
In this work, we revisited the derivation of DPO and revealed two counter-intuitive findings. We first proved that the standard DPO loss can arise from a formulation that regularizes only in-sample responses. This, in turn, uncovered another surprising result: both preferred and rejected responses are likely to experience a decrease in likelihood. This insight shed light on the likelihood-displacement phenomenon observed in prior empirical studies. We then generalized this result through a unifying classification framework, highlighting a broader absence of out-of-sample KL regularization.

Building on these findings, we then introduced Constrained Controlled DPO (\texttt{C2-DPO}), a principled extension of \texttt{DPO} that controls probability displacement by constraining the redistribution of likelihood between preferred and rejected responses. \texttt{C2-DPO} retains the computational efficiency of standard \texttt{DPO} while still offering a clear RLHF interpretation.

We know that \texttt{DPO} displaces probabilities to unseen samples, but we do not have a clear picture in terms of the kinds of responses for which the probability increases. While~\cite{distilled_DPO} and \cite{razin2025unintentional} present some findings for special cases, it would be interesting to do a more systematic study of the kinds of unseen responses whose likelihood increases during training.

\clearpage
\bibliographystyle{plainnat} 
\bibliography{references}

\begin{thebibliography}{40}
\providecommand{\natexlab}[1]{#1}
\providecommand{\url}[1]{\texttt{#1}}
\expandafter\ifx\csname urlstyle\endcsname\relax
  \providecommand{\doi}[1]{doi: #1}\else
  \providecommand{\doi}{doi: \begingroup \urlstyle{rm}\Url}\fi

\bibitem[Adler et~al.(2024)Adler, Agarwal, Aithal, Anh, Bhattacharya, Brundyn,
  Casper, Catanzaro, Clay, Cohen, et~al.]{nemotron}
Bo~Adler, Niket Agarwal, Ashwath Aithal, Dong~H Anh, Pallab Bhattacharya,
  Annika Brundyn, Jared Casper, Bryan Catanzaro, Sharon Clay, Jonathan Cohen,
  et~al.
\newblock Nemotron-4 340{B} technical report.
\newblock \emph{arXiv preprint arXiv:2406.11704}, 2024.

\bibitem[Amini et~al.(2024)Amini, Vieira, and Cotterell]{odpo}
Afra Amini, Tim Vieira, and Ryan Cotterell.
\newblock Direct preference optimization with an offset, 2024.
\newblock URL \url{https://arxiv.org/abs/2402.10571}.

\bibitem[Azar et~al.(2024)Azar, Guo, Piot, Munos, Rowland, Valko, and
  Calandriello]{IPO}
Mohammad~Gheshlaghi Azar, Zhaohan~Daniel Guo, Bilal Piot, Remi Munos, Mark
  Rowland, Michal Valko, and Daniele Calandriello.
\newblock A general theoretical paradigm to understand learning from human
  preferences.
\newblock In \emph{International Conference on Artificial Intelligence and
  Statistics}, pages 4447--4455. PMLR, 2024.

\bibitem[Choi et~al.(2025)Choi, Ahmadian, Geist, Pietquin, and
  Azar]{choi2025self}
Eugene Choi, Arash Ahmadian, Matthieu Geist, Oilvier Pietquin, and
  Mohammad~Gheshlaghi Azar.
\newblock Self-improving robust preference optimization, 2025.

\bibitem[Christiano et~al.(2017)Christiano, Leike, Brown, Martic, Legg, and
  Amodei]{RLHF}
Paul~F Christiano, Jan Leike, Tom Brown, Miljan Martic, Shane Legg, and Dario
  Amodei.
\newblock Deep reinforcement learning from human preferences.
\newblock \emph{Advances in neural information processing systems}, 30, 2017.

\bibitem[Deng et~al.(2025)Deng, Ha, Rui, Fuli, Zheng, and Xiangnan]{dpoless}
Xun Deng, Zhong Ha, Ai~Rui, Feng Fuli, Wang Zheng, and He~Xiangnan.
\newblock Less is more: Improving llm alignment via preference data selection.
\newblock \emph{arXiv preprint arXiv:2502.14560}, 2025.

\bibitem[D'Oosterlinck et~al.(2024)D'Oosterlinck, Xu, Develder, Demeester,
  Singh, Potts, Kiela, and Mehri]{APO}
Karel D'Oosterlinck, Winnie Xu, Chris Develder, Thomas Demeester, Amanpreet
  Singh, Christopher Potts, Douwe Kiela, and Shikib Mehri.
\newblock Anchored preference optimization and contrastive revisions:
  Addressing underspecification in alignment.
\newblock \emph{arXiv preprint arXiv:2408.06266}, 2024.

\bibitem[Feng et~al.(2024)Feng, Qin, Huang, Zhang, and Lei]{feng2024towards}
Duanyu Feng, Bowen Qin, Chen Huang, Zheng Zhang, and Wenqiang Lei.
\newblock Towards analyzing and understanding the limitations of dpo: A
  theoretical perspective.
\newblock \emph{arXiv preprint arXiv:2404.04626}, 2024.

\bibitem[Fisch et~al.(2024)Fisch, Eisenstein, Zayats, Agarwal, Beirami, Nagpal,
  Shaw, and Berant]{distilled_DPO}
Adam Fisch, Jacob Eisenstein, Vicky Zayats, Alekh Agarwal, Ahmad Beirami,
  Chirag Nagpal, Pete Shaw, and Jonathan Berant.
\newblock Robust preference optimization through reward model distillation.
\newblock \emph{arXiv preprint arXiv:2405.19316}, 2024.

\bibitem[Guo et~al.(2024)Guo, Lu, Bo, and Jiaqi]{dpotodo}
Yuxiang Guo, Yin Lu, Jiang Bo, and Zhang Jiaqi.
\newblock {TODO}: Enhancing llm alignment with ternary preferences.
\newblock \emph{arXiv preprint arXiv:2411.02442}, 2024.

\bibitem[Han et~al.(2024)Han, Jiang, Song, Leskovec, Ermon, and Xu]{han2024f}
Jiaqi Han, Mingjian Jiang, Yuxuan Song, Jure Leskovec, Stefano Ermon, and
  Minkai Xu.
\newblock $ f $-po: Generalizing preference optimization with $ f $-divergence
  minimization.
\newblock \emph{arXiv preprint arXiv:2410.21662}, 2024.

\bibitem[Huang et~al.(2025)Huang, Zhan, Xie, Lee, Sun, Krishnamurthy, and
  Foster]{huang2025correcting}
Audrey Huang, Wenhao Zhan, Tengyang Xie, Jason~D. Lee, Wen Sun, Akshay
  Krishnamurthy, and Dylan~J. Foster.
\newblock Correcting the mythos of kl-regularization: Direct alignment without
  overoptimization via chi-squared preference optimization, 2025.

\bibitem[Im and Li(2024)]{im2024generalization}
Shawn Im and Yixuan Li.
\newblock On the generalization of preference learning with dpo.
\newblock \emph{arXiv preprint arXiv:2408.03459}, 2024.

\bibitem[Liu et~al.(2024)Liu, Lu, Zhang, Liu, Guo, Yang, Blanchet, and
  Wang]{NEURIPS2024_fa69e968}
Zhihan Liu, Miao Lu, Shenao Zhang, Boyi Liu, Hongyi Guo, Yingxiang Yang, Jose
  Blanchet, and Zhaoran Wang.
\newblock Provably mitigating overoptimization in rlhf: Your sft loss is
  implicitly an adversarial regularizer.
\newblock In A.~Globerson, L.~Mackey, D.~Belgrave, A.~Fan, U.~Paquet,
  J.~Tomczak, and C.~Zhang, editors, \emph{Advances in Neural Information
  Processing Systems}, volume~37, pages 138663--138697. Curran Associates,
  Inc., 2024.

\bibitem[Mitchell(2023)]{mitchell2023dpo_implementation}
Eric Mitchell.
\newblock Direct preference optimization.
\newblock github.com/eric-mitchell/direct-preference-optimization, 2023.
\newblock Accessed: 2024-1-01.

\bibitem[Mitchell(2024)]{CDPO}
Eric Mitchell.
\newblock A note on {DPO} with noisy preferences and relationship to {IPO},
  2024.
\newblock URL \url{https://ericmitchell.ai/cdpo.pdf}.

\bibitem[M{\"u}ller et~al.(2019)M{\"u}ller, Kornblith, and Hinton]{softLabels}
Rafael M{\"u}ller, Simon Kornblith, and Geoffrey~E Hinton.
\newblock When does label smoothing help?
\newblock \emph{Advances in neural information processing systems}, 32, 2019.

\bibitem[OLMo et~al.(2024)OLMo, Walsh, Soldaini, Groeneveld, Lo, Arora, Bhagia,
  Gu, Huang, Jordan, Lambert, Schwenk, Tafjord, Anderson, Atkinson, Brahman,
  Clark, Dasigi, Dziri, Guerquin, Ivison, Koh, Liu, Malik, Merrill, Miranda,
  Morrison, Murray, Nam, Pyatkin, Rangapur, Schmitz, Skjonsberg, Wadden,
  Wilhelm, Wilson, Zettlemoyer, Farhadi, Smith, and
  Hajishirzi]{olmo20242olmo2furious}
Team OLMo, Pete Walsh, Luca Soldaini, Dirk Groeneveld, Kyle Lo, Shane Arora,
  Akshita Bhagia, Yuling Gu, Shengyi Huang, Matt Jordan, Nathan Lambert, Dustin
  Schwenk, Oyvind Tafjord, Taira Anderson, David Atkinson, Faeze Brahman,
  Christopher Clark, Pradeep Dasigi, Nouha Dziri, Michal Guerquin, Hamish
  Ivison, Pang~Wei Koh, Jiacheng Liu, Saumya Malik, William Merrill, Lester
  James~V. Miranda, Jacob Morrison, Tyler Murray, Crystal Nam, Valentina
  Pyatkin, Aman Rangapur, Michael Schmitz, Sam Skjonsberg, David Wadden,
  Christopher Wilhelm, Michael Wilson, Luke Zettlemoyer, Ali Farhadi, Noah~A.
  Smith, and Hannaneh Hajishirzi.
\newblock 2 olmo 2 furious.
\newblock 2024.
\newblock URL \url{https://arxiv.org/abs/2501.00656}.

\bibitem[Pal et~al.(2024)Pal, Karkhanis, Dooley, Roberts, Naidu, and
  White]{smaug}
Arka Pal, Deep Karkhanis, Samuel Dooley, Manley Roberts, Siddartha Naidu, and
  Colin White.
\newblock Smaug: Fixing failure modes of preference optimisation with
  {DPO}-positive.
\newblock \emph{arXiv preprint arXiv:2402.13228}, 2024.

\bibitem[Rafailov et~al.(2023)Rafailov, Sharma, Mitchell, Manning, Ermon, and
  Finn]{DPO}
Rafael Rafailov, Archit Sharma, Eric Mitchell, Christopher~D Manning, Stefano
  Ermon, and Chelsea Finn.
\newblock Direct preference optimization: Your language model is secretly a
  reward model.
\newblock In \emph{Advances in Neural Information Processing Systems},
  volume~36, pages 53728--53741, 2023.

\bibitem[Rasul et~al.(2024)Rasul, Beeching, Tunstall, von Werra, and
  Sanseviero]{rasul2024preference}
Kashif Rasul, Edward Beeching, Lewis Tunstall, Leandro von Werra, and Omar
  Sanseviero.
\newblock Preference tuning llms with direct preference optimization methods,
  2024.
\newblock URL \url{https://huggingface.co/blog/pref-tuning}.

\bibitem[Razin et~al.(2025)Razin, Malladi, Bhaskar, Chen, Arora, and
  Hanin]{razin2025unintentional}
Noam Razin, Sadhika Malladi, Adithya Bhaskar, Danqi Chen, Sanjeev Arora, and
  Boris Hanin.
\newblock Unintentional unalignment: Likelihood displacement in direct
  preference optimization.
\newblock In \emph{International Conference on Learning Representations}, 2025.

\bibitem[Shao et~al.(2025)Shao, Li, Liu, Chen, Zhou, Wang, Cai, and
  Li]{shao2025earlier}
Ruichen Shao, Bei Li, Gangao Liu, Yang Chen, Xiang Zhou, Jingang Wang, Xunliang
  Cai, and Peng Li.
\newblock Earlier tokens contribute more: Learning direct preference
  optimization from temporal decay perspective, 2025.

\bibitem[Sharifnassab et~al.(2024)Sharifnassab, Salehkaleybar, Ghiassian,
  Kanoria, and Schuurmans]{sharifnassab2024soft}
Arsalan Sharifnassab, Saber Salehkaleybar, Sina Ghiassian, Surya Kanoria, and
  Dale Schuurmans.
\newblock Soft preference optimization: Aligning language models to expert
  distributions.
\newblock \emph{arXiv preprint arXiv:2405.00747}, 2024.

\bibitem[Shen et~al.(2024)Shen, Wang, Niu, Zhou, Tang, Zhang, Chen, and
  Wen]{AIPO}
Yaojie Shen, Xinyao Wang, Yulei Niu, Ying Zhou, Lexin Tang, Libo Zhang, Fan
  Chen, and Longyin Wen.
\newblock {AIPO}: Improving training objective for iterative preference
  optimization.
\newblock \emph{arXiv preprint arXiv:2409.08845}, 2024.

\bibitem[Stiennon et~al.(2022)Stiennon, Ouyang, Wu, Ziegler, Lowe, Voss,
  Radford, Amodei, and Christiano]{reddit}
Nisan Stiennon, Long Ouyang, Jeff Wu, Daniel~M. Ziegler, Ryan Lowe, Chelsea
  Voss, Alec Radford, Dario Amodei, and Paul Christiano.
\newblock Learning to summarize from human feedback, 2022.
\newblock URL \url{https://arxiv.org/abs/2009.01325}.

\bibitem[Su et~al.(2025)Su, Wang, Zhu, Yi, Xu, Ma, and Liu]{su2025reveal}
Xuerui Su, Yue Wang, Jinhua Zhu, Mingyang Yi, Feng Xu, Zhiming Ma, and Yuting
  Liu.
\newblock Reveal the mystery of dpo: The connection between dpo and rl
  algorithms.
\newblock \emph{arXiv preprint arXiv:2502.03095}, 2025.

\bibitem[Tang et~al.(2024)Tang, Guo, Zheng, Calandriello, Munos, Rowland,
  Richemond, Valko, Pires, and Piot]{tang2024generalized}
Yunhao Tang, Zhaohan~Daniel Guo, Zeyu Zheng, Daniele Calandriello, R{\'e}mi
  Munos, Mark Rowland, Pierre~Harvey Richemond, Michal Valko,
  Bernardo~{\'A}vila Pires, and Bilal Piot.
\newblock Generalized preference optimization: A unified approach to offline
  alignment.
\newblock \emph{arXiv preprint arXiv:2402.05749}, 2024.

\bibitem[Wang and Komatsuzaki(2021)]{gpt-j}
Ben Wang and Aran Komatsuzaki.
\newblock {GPT-J-6B: A 6 Billion Parameter Autoregressive Language Model}.
\newblock \url{https://github.com/kingoflolz/mesh-transformer-jax}, May 2021.

\bibitem[Wu et~al.(2024)Wu, Sun, Yuan, Ji, Yang, and Gu]{sppo}
Yue Wu, Zhiqing Sun, Huizhuo Yuan, Kaixuan Ji, Yiming Yang, and Quanquan Gu.
\newblock Self-play preference optimization for language model alignment,
  2024b.
\newblock \emph{arXiv preprint arXiv:2405.00675}, 2024.

\bibitem[Xiao et~al.(2024)Xiao, Yuan, Zhu, Li, and Honavar]{xiao2024caldpo}
Teng Xiao, Yige Yuan, Huaisheng Zhu, Mingxiao Li, and Vasant~G Honavar.
\newblock Cal-{DPO}: Calibrated direct preference optimization for language
  model alignment.
\newblock In \emph{The Thirty-eighth Annual Conference on Neural Information
  Processing Systems}, 2024.

\bibitem[Xie et~al.(2024)Xie, Chen, Yu, Sun, Wu, and Hu]{xie2024minor}
Shiming Xie, Hong Chen, Fred Yu, Zeye Sun, Xiuyu Wu, and Yingfan Hu.
\newblock Minor dpo reject penalty to increase training robustness.
\newblock \emph{arXiv preprint arXiv:2408.09834}, 2024.

\bibitem[Xiliang et~al.(2025)Xiliang, Feng, Qianen, Lei, and Xiao]{dposhift}
Yang Xiliang, Jiang Feng, Zhang Qianen, Zhao Lei, and Li~Xiao.
\newblock {DPO}-shift: Shifting the distribution of direct preference
  optimization.
\newblock \emph{arXiv preprint arXiv:2502.07599v1}, 2025.

\bibitem[Xu et~al.(2024)Xu, Fu, Gao, Ye, Liu, Mei, Wang, Yu, and
  Wu]{ppo_vs_dpo}
Shusheng Xu, Wei Fu, Jiaxuan Gao, Wenjie Ye, Weilin Liu, Zhiyu Mei, Guangju
  Wang, Chao Yu, and Yi~Wu.
\newblock Is {DPO} superior to {PPO} for {LLM} alignment? a comprehensive
  study.
\newblock \emph{arXiv preprint arXiv:2404.10719}, 2024.

\bibitem[Yao et~al.(2025)Yao, Cai, Chuang, Yang, Jiang, Yang, and
  Hu]{yao2025preference}
Binwei Yao, Zefan Cai, Yun-Shiuan Chuang, Shanglin Yang, Ming Jiang, Diyi Yang,
  and Junjie Hu.
\newblock No preference left behind: Group distributional preference
  optimization, 2025.

\bibitem[Yin et~al.(2024)Yin, Chak~Tou, Hongbo, Minjun, Hanqi, Qiang, Yulan,
  Wenjie, Jun, Yue, and Linyi]{dposparse}
Qingyu Yin, Leong Chak~Tou, Zhang Hongbo, Zhu Minjun, Yan Hanqi, Zhang Qiang,
  He~Yulan, Li~Wenjie, Wang Jun, Zhang Yue, and Yang Linyi.
\newblock Direct preference optimization using sparse feature-level
  constraints.
\newblock \emph{arXiv preprint arXiv:2411.07618}, 2024.

\bibitem[Yunan et~al.(2025)Yunan, Jijie, Bo-Wen, Liangdong, and
  Guang]{dpobalancing}
Wang Yunan, Li~Jijie, Zhang Bo-Wen, Wang Liangdong, and Liu Guang.
\newblock Inco-{DPO}: Balancing distribution shift and data quality for
  enhancedpreference optimization.
\newblock \emph{arXiv preprint arXiv:2503.15880v1}, 2025.

\bibitem[Yuzi et~al.(2025)Yuzi, Yibo, Jialian, Yipin, Jian, Zhijie, and
  Dong]{yuzi2025identifying}
Yan Yuzi, Miao Yibo, Li~Jialian, Zhang Yipin, Xie Jian, Deng Zhijie, and Yan
  Dong.
\newblock 3{D}-properties: Identifying challenges in dpo and charting a path
  forward.
\newblock In \emph{International Conference on Learning Representations}, 2025.

\bibitem[Zhao et~al.(2025)Zhao, Winata, Das, Zhang, Yao, Tang, and
  Sahu]{zhao2025rainbow}
Hanyang Zhao, Genta~Indra Winata, Anirban Das, Shi-Xiong Zhang, David~D. Yao,
  Wenpin Tang, and Sambit Sahu.
\newblock Rainbowpo: A unified framework for combining improvements in
  preference optimization, 2025.

\bibitem[Zheng et~al.(2023)Zheng, Chiang, Sheng, Zhuang, Wu, Zhuang, Lin, Li,
  Li, Xing, et~al.]{mt_bench}
Lianmin Zheng, Wei-Lin Chiang, Ying Sheng, Siyuan Zhuang, Zhanghao Wu, Yonghao
  Zhuang, Zi~Lin, Zhuohan Li, Dacheng Li, Eric Xing, et~al.
\newblock Judging llm-as-a-judge with mt-bench and chatbot arena.
\newblock \emph{Advances in Neural Information Processing Systems},
  36:\penalty0 46595--46623, 2023.

\end{thebibliography}

\clearpage
\onecolumn
\appendix
\section{Proof of Lemmas \ref{L:DPO_non_standard} and \ref{lemma: wierd e^-1 phenomenon}}
\label{Appendix_Idan}
    In order to prove Lemma \ref{L:DPO_non_standard} we will need first the following technical result. We recall that $\Delta_{n}$ denotes the simplex in $\mathbb{R}^{n}$ meaning the set of all vectors $p \in \mathbb{R}^{n}$ for which $\sum_{i = 1}^{n} p_{i} = 1$ and $p \geq 0$.

\begin{lemma}\label{lemma: KL of subset S}
    Let $q \in \Delta_{n}$ and $r \in \mathbb{R}^{n}$. Let $S \subseteq [n] := \{ 1 , 2 , \ldots , n \}$ be a subset for which $|S| \geq 2$. Any optimal solution $p$ of the following problem
    \begin{equation}\label{eq: S problem}
        \argmax_{p \in \Delta_{n}} \left\{ r^{T}p - \beta \sum_{i \in S} p_i \ln \frac{p_i}{q_i} \right\} \; .
    \end{equation}
    satisfies that $\beta\left(\ln\left(p_{i}/q_{i}\right) - \ln\left(p_{j}/q_{j}\right)\right) = r_{i} - r_{j}$ holds true for any $i , j \in S$.
\end{lemma}
\begin{proof}
    The problem \eqref{eq: S problem} is convex and Slater's condition is satisfied; hence, the set of optimal solutions coincides with the set of KKT points. Therefore, we will consider the Lagrangian
    \begin{equation}
        L(p, \lambda, \mu) = r^{T}p - \beta \sum_{i \in S} p_i \ln \frac{p_i}{q_i} + \lambda^T p + \mu\left(\sum_{i=1}^n p_i - 1\right) \; ,
    \end{equation}
    where $\lambda \in \mathbb{R}_{+}^{n}$ and $\mu \in \mathbb{R}$ are the Lagrange multipliers. The KKT conditions are
    \begin{align}
    \begin{cases}
         \frac{\partial L}{\partial p_i} = r_i - \beta \left( \ln \frac{p_i}{q_i} + 1 \right) + \lambda_i + \mu = 0, & \forall i \in S\\
        \frac{\partial L}{\partial p_i} = r_i + \lambda_i + \mu= 0, & \forall i \notin S, \\
        p_i \geq 0, \quad \sum_{i=1}^{n} p_i = 1, & \text{(feasibility)}, \\
        \lambda_i \geq 0, & \forall i \in [n], \quad \text{(multipliers)}, \\
        \lambda_i p_i = 0, & \forall i \in [n], \quad \text{(complementary slackness)}.        
    \end{cases}
    \end{align}
    Let $i \in S$, after rearranging the condition we get that $p_i = q_i e^{(r_i + \lambda_i + \mu)/\beta - 1}$. Moreover, if $q_i > 0$ then $p_i > 0$ and hence $\lambda_i = 0$. On the other hand, if $q_i = 0$ then $p_i = 0$ and the value of $\lambda_i$ does not matter. Hence, for simplicity, we can take $\lambda_{i} = 0$ for all $i \in S$. Thus, the KKT conditions can be equivalently written as follows
    \begin{align}
    \begin{cases}
        p_i = q_i e^{(r_i + \mu)/\beta - 1}, & \forall i \in S. \\
        r_i + \lambda_i + \mu= 0, & \forall i \notin S, \\
        p \in \Delta_{n}, & \text{(feasibility)}, \\
        \lambda_i \geq 0, & \forall i \notin S, \quad \text{(multipliers)}, \\
        \lambda_i p_i = 0, & \forall i \notin S, \quad \text{(complementary slackness)}.
    \end{cases}
    \end{align}    
    Let $\hat{r} = \max_{i \notin S} r_i$. We now split the proof into the two cases:
    \begin{itemize}
        \item \underline{Case I: $\hat{r} > \beta \ln \sum_{i \in S} q_i e^{r_i /\beta - 1}$}

        First, for any $i \notin S$, since $\lambda_i \geq 0$ we have that $\mu \leq -r_i$ and therefore in particular $\mu \leq -\hat{r}$. Thus, there must be $i \notin S$ for which $p_i > 0$. Indeed, if this is not the case then $p_{i} = 0$ for all $i \notin S$ and we get a contradiction since (recall that $\mu \leq -\hat{r}$)
        \begin{equation} 
            1 = \sum_{i = 1}^{n} p_i = \sum_{i \in S} p_i = \sum_{i \in S} q_i e^{(r_i + \mu)/\beta - 1} \leq \sum_{i \in S} q_i e^{(r_i - \hat{r})/\beta - 1} < 1 \ ,
        \end{equation}
        where the last inequality follows from the condition of Case I. Therefore, for simplicity we take $i_{\ast} \notin S$ for which $p_{i_{\ast}} > 0$. Therefore, $\lambda_{i_{\ast}} = 0$.
        
        Now, for all $j \notin S$, we have $r_{i_{\ast}} + \lambda_{i_{\ast}} = r_j + \lambda_j$ and thus $r_{i_{\ast}} = r_j + \lambda_j \geq r_j$, which means that $r_{i_{\ast}} = \hat{r}$. Hence $\mu = - \hat{r}$, and therefore $p_i = q_i e^{(r_i - \hat{r})/\beta - 1}$ for all $i \in S$.
        
        Moreover, this shows that if for some $i \notin S$ we have $r_i < \hat{r}$, then $p_i = 0$. Therefore, by using the feasibility condition we obtain that any KKT point can be described by
        \begin{align*}
            p_i & = q_i e^{(r_i - \hat{r})/\beta - 1},  \quad \forall i \in S \\ 
            p_i & = 0, \quad \forall i \notin S, r_i < \hat{r} \\
            \sum_{j \notin S, r_j = \hat{r}} p_j & = 1 - \sum_{j \in S} q_j e^{(r_j - \hat{r})/\beta -1}  \; .
        \end{align*}
        \item \underline{Case II: $\hat{r} \le \beta \ln \sum_{i \in S} q_i e^{r_i/\beta - 1 }$} 
    
        In this case, we will first prove that $p_i = 0$ for all $i \notin S$. Suppose in contradiction that $p_{\ell} > 0$ for some $\ell \notin S$. Then, $\lambda_{\ell} = 0$ and hence $\mu = -r_{\ell}$. Recall that $r_\ell \le \hat{r}$, we obtain that
        \begin{equation*}
            1 = \sum_{i = 1}^{n} p_i \geq p_{\ell} + \sum_{i \in S} p_i = p_\ell + \sum_{i \in S} q_i e^{(r_i - r_\ell)/\beta - 1} \ge p_\ell + \sum_{i \in S} q_i e^{(r_i - \hat{r})/\beta - 1} \ge p_\ell + 1 > 1 \; ,
        \end{equation*}
        where the third inequality follows from the condition of Case II. Therefore, the unique KKT point is given by
        \begin{align*}            
            p_i & = \frac{q_i e^{r_i/\beta}}{ \sum_{j \in S} q_j e^{r_j/\beta}}, \quad \forall i \in S \\
            p_i & = 0,   \quad \forall i \notin S \; .
        \end{align*}
    \end{itemize}
    Finally, we see that in both cases, we have, for any $i , j \in S$, that
    \begin{equation*}
        \beta\left(\ln\left(p_{i}/q_{i}\right) - \ln\left(p_{j}/q_{j}\right)\right) = r_{i} - r_{j},
    \end{equation*}
    as required.
\end{proof}
Now, we can prove Lemma \ref{L:DPO_non_standard}. To this end, we recall the new optimization problem that we study
\begin{equation} \label{eq:non-standard_RLHF-app}
    \!\!\!\!\max_{\theta}\ \E{x}\left[ \E{y \sim \pi_{\theta}}\big[r_\phi(x,y)\big] - \beta \sum_{y \in S_{x}} \pi_{\theta}(y | x)\log \frac{\pi_{\theta}(y | x)}{\piRef(y | x)} \right].
\end{equation}

\textit{Proof of Lemma \ref{L:DPO_non_standard}.}
    %First, note that the optimization problem (6) is separable in $x$, so we can fix some prompt $x$ and consider the problem
    %\begin{equation}\label{eq: opt problem subset S_x fixed x}
    %\max_{\pi_{\theta}} \;  \mathbb{E}_{y \sim \pi_{\theta}\left(x\right)} \left[ r_{\phi}(x, y) \right] - \beta \sum_{y \in {S}_x} \pi_{\theta}(y|x) %\log \frac{\pi_{\theta}(y|x)}{\pi_{\text{ref}}(y|x)}
    %\end{equation}
    From Lemma \ref{lemma: KL of subset S}, the optimization problem \eqref{eq:non-standard_RLHF-app} admits a closed-form solution
    %, which %depends on the relation between $\hat{r}_x \coloneqq \max_{y \notin S_x} r_\phi(x, y)$ and the rewards within $S_x$.
    %In the case where 
    
    %\begin{equation}\label{eq: condition}
    %    \hat{r}_x > \beta \ln\rb{\sum_{y \in S_x} \piref\rb{y \mid x} e^{\oov{\beta}r_\phi\rb{x,y} - 1}} \; ,
    %\end{equation}
    %a policy $\pi_\theta$ is an optimal solution to \eqref{eq: opt problem subset S_x fixed x} if and only if
    %\begin{align}
    %    \pi_\theta\rb{y \mid x} & = \piref\rb{y \mid x} \exp\rb{\oov{\beta} \rb{r_\phi\rb{x, y} - \hat{r}_x} - 1},  \quad \forall y \in S_x \\ 
    %    \sum_{y \notin S_x, r_\phi(x,y) = \hat{r}_x} \pi_\theta\rb{y \mid x} & = 1 - \sum_{y \in S_x} \piref\rb{y \mid x} \exp\rb{\oov{\beta} \rb{r_\phi\rb{x, y} - \hat{r}_x} - 1}   \\
    %     \pi_\theta\rb{y \mid x} & = 0, \quad \forall y \notin S_x, r_\phi\rb{x,y} < \hat{r}_x \; .
    %\end{align}
    %\pipanote{We might want to delete the second line in the above.}
    %In the case where condition \eqref{eq: condition} does not hold, the unique optimal solution to \eqref{eq: opt problem subset S_x fixed x} is 
    %\begin{align}\label{eq: solution case II policy}
    %    \pi_\theta\rb{y \mid x} & = \frac{\piref\rb{y \mid x} e^{\oov{\beta}r_\phi\rb{x,y}}}{ \sum_{y' \in S_x} \piref\rb{y' \mid x} e^{\oov{\beta}r_\phi\rb{x, y'}}}, \quad \forall y \in S_x \\
    %    \pi_\theta\rb{y \mid x} & = 0,   \quad \forall y \notin S_x \; .
    %\end{align}

    %It is straightforward to check that in both cases, for any optimal policy 
    $\pi_\theta$, such that for any two responses $y, y'$ within the set $S_x$ satisfy
    %\begin{equation}
    %    \frac{\pi_\theta(y | x)}{\pi_\theta(y' | x)} = \frac{\piRef(y | x) e^{\oov{\beta} r_\phi\rb{x, y} }}{\piref\rb{y' \mid x} e^{\oov{\beta} r_\phi\rb{x, y'} }} \; ,
    %\end{equation}
    %which is equivalent to
    \begin{equation}
        r_\phi(x, y) - r_\phi(x, y') = \beta \ln{\frac{\pi_\theta(y | x)}{\piRef(y | x)}} - \beta \ln \frac{\pi_\theta(y' | x)}{\piRef(y' | x)} \; .
    \end{equation}

    Therefore, in particular, for any triplet in the preference dataset $(x, y_w, y_l) \in \mathcal{D}$, the BT model gives us

    \begin{equation}\label{eq: probabiity in terms of r in proof}
        p(y_w \succ y_l | x) = \sigma\left(\beta \ln{\frac{\pi_\theta(y_w | x)}{\piRef(y_w | x)}} - \beta \ln \frac{\pi_\theta(y_l | x)}{\piRef(y_l | x)} \right) \; ,
    \end{equation}

    which are exactly the probabilities used in the \texttt{DPO} derivation, and taking the negative-log-likelihood of \eqref{eq: probabiity in terms of r in proof} over the entire dataset yields exactly the \texttt{DPO} loss. \qed

    We complete this section with the proof of Lemma \ref{lemma: wierd e^-1 phenomenon}.

    \textit{Proof of Lemma \ref{lemma: wierd e^-1 phenomenon}.}
    If $r_\phi(x,y) \le \hat{r}_x$ for every $y \in S_x$, than the condition $\hat{r}_x > \beta \ln(\sum_{y \in S_x} \piRef(y | x)e^{r_{\phi}(x,y)/\beta - 1})$ is satisfied. This is because
    \begin{align*}
        \beta \ln\sum_{y \in S_x} \piRef(y | x) e^{r_\phi(x,y)/\beta - 1} & = \beta \ln\sum_{y \in S_x} \piRef(y | x)e^{\hat{r}_x/\beta + (r_\phi(x,y) - \hat{r}_x)/\beta - 1} \\ 
        & = \hat{r}_x + \beta \ln\sum_{y \in S_x} \piRef(y | x) e^{(r_\phi(x,y) - \hat{r}_x)/\beta - 1} \\
        & \le \hat{r}_x + \beta \ln\sum_{y \in S_x} \piRef(y | x) e^{-1} \\ 
        & = \hat{r}_x + \beta\left(-1 + \ln \sum_{y \in S_x} \piRef(y | x)\right) \\ 
        & \leq \hat{r}_x - \beta \ ,
    \end{align*}
    where the last inequality follows from the fact that $\sum_{y \in S_x} \piRef(y | x) \leq 1$. This proves that the condition holds true.
    
    Therefore, as we saw in the proof of Lemma \ref{L:DPO_non_standard}, in any optimal solution $\pi_\theta$ we have for all $y \in S_x$, that
    \begin{equation*}
        \pi_\theta(y | x) = \piRef(y | x) e^{(r_\phi(x, y) - \hat{r}_x)/\beta - 1} \le \piRef(y | x) e^{-1} = e^{-1}\piRef(y | x),
    \end{equation*}
    which proves the desired result. \qed

\section{Derivations of Existing Algorithms via the Classification Framework} \label{Appendix:Algo}
\subsection{\texttt{IPO}} \label{Appendix:IPO}

    We recall that \texttt{IPO} can be formulated as a classification problem with the soft labels $p \assign (p^{w}, p^{l}) = (\sigma(1/2) , \sigma(-1/2))$ and the loss given by
    \begin{equation} \label{IPO-Take2}
        \mathcal{L}\big(p_{\theta},\ p\big) = \left(\log\frac{p^{w}_{\theta}}{p^{l}_{\theta}} - \log\frac{p^{w}}{p^{l}}\right)^2\ .
    \end{equation}
    To show that we indeed recover the \texttt{IPO}, we first note that
    \begin{equation*}
        \frac{p^{w}}{p^{l}} = \frac{\sigma(1/2)}{\sigma(-1/2)} = \frac{1 + \exp(1/2)}{1 + \exp(-1/2)} = \exp(1/2)\ ,
    \end{equation*}
    where the second equality follows from the definition of the sigmoid $\sigma(x) = 1/(1 + \exp(-x))$. Moreover, using the definitions of $p^{w}_{\theta}$ (see \eqref{pthetaw}) and $p^{l}_{\theta}$, we obtain that 
    \begin{equation} \label{IPOfrac}
        \frac{p^{w}_{\theta}}{p^{l}_{\theta}} = \frac{\big(\frac{\pi_{\theta}(y_w \mid x)}{\piRef(y_w \mid x)}\big)^{\beta}}{\big(\frac{\pi_{\theta}(y_l \mid x)}{\piRef(y_l \mid x)}\big)^{\beta}} = \left(\frac{\pi_{\theta}(y_w | x)\piRef(y_l | x)}{\pi_{\theta}(y_l | x)\piRef(y_w | x)}\right)^{\beta}\ .
    \end{equation}
    Plugging these two developments to the loss in \eqref{IPO-Take2} yields
    \begin{equation*}
        \mathcal{L}\big(p_{\theta},\ p\big) = \left(\log\frac{p^{w}_{\theta}}{p^{l}_{\theta}} - \log\frac{p^{w}}{p^{l}}\right)^2 = \left(\beta\log \frac{\pi_{\theta}(y_w | x)\piRef(y_l | x)}{\pi_{\theta}(y_l | x)\piRef(y_w | x)} - \frac{1}{2}\right)^2 = \left(\log \frac{\pi_{\theta}(y_w | x)\piRef(y_l | x)}{\pi_{\theta}(y_l | x)\piRef(y_w | x)} - \frac{1}{2\beta}\right)^2\ ,
    \end{equation*} 
    which is exactly \texttt{IPO} (see Eq. (17) of~\citealt{IPO}).

\subsection{\texttt{CDPO}} \label{Appendix:CDPO}
    Recall the \texttt{CDPO} \cite{CDPO} loss is given for any triplet $(x , y_{w} , y_{l})$ by
    \begin{equation*}
        -(1 - \varepsilon)\log \sigma\left(\beta\log \frac{\pi_{\theta}(y_w | x)}{\piRef(y_w | x)} - \beta\log \frac{\pi_{\theta}(y_l | x)}{\piRef(y_l | x)}\right) - \varepsilon\log \sigma\left(\beta\log \frac{\pi_{\theta}(y_l | x)}{\piRef(y_l | x)} - \beta\log \frac{\pi_{\theta}(y_w | x)}{\piRef(y_w | x)}\right)\ ,
    \end{equation*}
    and then summed over all the triplets in the dataset ${\cal D}$. In order to see \texttt{CDPO} as a classification with the soft labels $p := (p^w,p^l) = (1-\varepsilon,\varepsilon)$ and the CE loss, we will use the following technical fact
    \begin{equation*}
        \sigma(\beta\log(a/b)) = \frac{1}{1 + \exp(-\beta\log(a/b))} = \frac{1}{1 + \frac{\exp{\log b^{\beta}}}{\exp{\log a^{\beta}}}} = \frac{1}{1 + \frac{b^{\beta}}{a^{\beta}}} = \frac{a^{\beta}}{a^{\beta} + b^{\beta}}.
    \end{equation*}
    Therefore, with $a = \frac{\pi_{\theta}(y_w|x)}{\piRef(y_w|x)}$ and $b = \frac{\pi_{\theta}(y_l|x)}{\piRef(y_l|x)}$ we get from \eqref{pthetaw} that $\sigma(\beta\log a/b) = p^{w}_{\theta}$. Similarly, we get that $\sigma(\beta\log(b/a)) = p^{l}_{\theta}$. Therefore, the \texttt{CDPO} loss can be written as follows
    \begin{equation*}
         -(1 - \varepsilon)\log p^{w}_{\theta} - \varepsilon \log p^{l}_{\theta} = -p^{w}\log p^{w}_{\theta} - p^{l}\log p^{l}_{\theta}\ ,
    \end{equation*}
    which is exactly the CE loss on the vectors $p_{\theta} = (p^{w}_{\theta} , p^{l}_{\theta})$ and $p = (p^w,p^l)$.

\subsection{\texttt{DPO (PL)}} \label{Appendix:DPO_PL}
    In this setting, we are given a dataset of the form $\mathcal{D} = \{ (x,y_1, y_2,\ldots,y_N) \}$. In order to recover, the \texttt{DPO} with Plackett-Luce~\cite{DPO}, we need to generalize the definition of the probability vector $p_{\theta}$ from pairs as in \eqref{pthetaw} to the following $N - 1$ subsets of the list, namely the first $N$, then the first $N-1$, then first $N-2$ and so on. More precisely, for any $1 \leq n < N$ we define 
    \begin{equation*}
        p_{\theta}(x,y_n, y_{n + 1} , \ldots, y_N) = \textrm{softmax}\big(\left(r_{\theta}(x,y_n), r_{\theta}(x,y_{n+1}), \ldots, r_{\theta}(x,y_N)\right)\big)\ .
    \end{equation*}
    In this case, the hard label vectors are defined, for any $1 \leq n < N$, by $p^{[n,N]} := (1 , 0 , \ldots , 0) \in \mathbb{R}^{N - n + 1}$. Now, using the CE loss we get the desired result as follows
    \begin{eqnarray*}
        -\sum_{n = 1}^{N - 1}\sum_{i = n}^{N} p_{i}^{[n,N]}\log p_{\theta}^{i}(x,y_n, y_{n + 1} , \ldots, y_N) & = & - \sum_{n = 1}^{N - 1} \log \frac{\exp\left(\beta\log \frac{Z(x)\pi_{\theta}(y_{n}|x)}{\piRef(y_{n}|x)}\right)}{\sum_{i = n}^{N} \exp\left(\beta\log \frac{Z(x)\pi_{\theta}(y_{i}|x)}{\piRef(y_{i}|x)}\right)}\\
        & = & - \sum_{n = 1}^{N - 1} \log \frac{\exp\left(\beta\log \frac{\pi_{\theta}(y_{n}|x)}{\piRef(y_{n}|x)}\right)}{\sum_{i = n}^{N} \exp\left(\beta\log \frac{\pi_{\theta}(y_{i}|x)}{\piRef(y_{i}|x)}\right)}\\
        & = & - \log \prod_{n = 1}^{N - 1} \frac{\exp\left(\beta\log \frac{\pi_{\theta}(y_{n}|x)}{\piRef(y_n|x)}\right)}{\sum_{i = n}^{N} \exp\left(\beta\log \frac{\pi_{\theta}(y_{i}|x)}{\piRef(y_{i}|x)}\right)}\ .
    \end{eqnarray*}

\subsection{\texttt{RPO} and \texttt{Distilled DPO}} \label{Appendix:RPO}
    As we discussed in Section 3.2, \texttt{RPO} can be reformulated as a classification with soft labels, which are defined by $p = \text{softmax}(s_w, s_l)$. Therefore, we immediately see that
    \begin{equation*}
        p^{w} = \frac{\exp(s_{w})}{\exp(s_{w}) + \exp(s_{l})} = \frac{1}{1 + \exp(-(s_{w} - s_{l}))}\ ,
    \end{equation*}
    and thus $p^{w} = \sigma(s_{w} - s_{l})$. Similarly, we get that $p^{l} = \sigma(-(s_{w} - s_{l}))$. Thus, \texttt{RPO} can be seen as a generalization of \texttt{CDPO} where the soft labels are given by a certain score and not fixed. To recover the \texttt{RPO} loss we denote $a = \beta\log \frac{\pi_{\theta}(y_w|x)}{\piRef(y_w|x)} - \beta\log \frac{\pi_{\theta}(y_l|x)}{\piRef(y_l|x)}$ and $b = s_w - s_l$. Then, we get that
    \begin{equation*}
        \sigma(b) \log \frac{\sigma(b)}{\sigma(a)} + (1-\sigma(b)) \log\frac{1-\sigma(b)}{1-\sigma(a)} = p^{w}\log\frac{p^{w}}{p_{\theta}^{w}} + p^{l}\log\frac{p^{l}}{p_{\theta}^{l}}\ .
    \end{equation*}
    By eliminating the constant terms (with respect to $\theta$) $p^{w}\log p^{w} + p^{l}\log p^{l}$, we indeed get the CE loss.

    To recover the \texttt{Distilled DPO} (Equation (7) of ~\citet{distilled_DPO}), we consider the soft labels $p = \text{softmax}(s_w, s_l)$ with the loss of IPO.
    \begin{equation*}
        \mathcal{L}\big(p_{\theta},\ p\big) = \left(\log\frac{p^{w}_{\theta}}{p^{l}_{\theta}} - \log\frac{p^{w}}{p^{l}}\right)^2\ .
    \end{equation*}
    Indeed, it this case we have that
    \begin{equation*}
        \frac{p^{w}}{p^{l}} = \frac{\exp(s_{w})}{\exp(s_{w}) + \exp(s_{l})} \cdot \frac{\exp(s_{w}) + \exp(s_{l})}{\exp(s_{l})} = \frac{\exp(s_{w})}{\exp(s_{l})}\ ,
    \end{equation*}
    and therefore $\log (p^{w}/p^{l}) = s_{w} - s_{l}$. Combining this with \eqref{IPOfrac} yields the desired result.

\subsection{List of Preferences}

Our classification framework can be extended to work with lists, rather than pairs, of preferences. In particular, assume that we have $N$ responses for each prompt $x$, giving us a dataset of the form $\mathcal{D} = \{ (x,y_1, y_2,\ldots,y_N) \}.$ In this case, we can define a list version of the probability vector $p_{\theta}(x,y_1,\ldots, y_N)$ similarly to~\eqref{eq:softmax_binary}, 
%
%$$p_{\theta}(x,y_1,\ldots, y_N) \assign \textrm{softmax}\big(r_{\theta}(x , y_1),\ldots,r_{\theta}(x , y_N)\big)\ ,$$
%
together with a target distribution $p$.
%$p \in \Delta_{N}$ being any vector in the $N$-dimensional simplex. 
With this simple extension we can now incorporate existing DPO-style algorithms that work with lists. For instance, we can show that \texttt{DPO} with the Plackett-Luce model for preferences~\cite{DPO} can be captured in our classification framework, again using hard labels and the CE loss. See Appendix \ref{Appendix:DPO_PL} for a proof.

\subsection{Auxiliary Information}

A second important extension pertains to the definition of soft labels in our classification setting. So far we have only worked with soft labels that are fixed across the entire dataset, for instance, $p \assign (p^{w}, p^{l}) = (\sigma(1/2) , \sigma(-1/2))$ for all $(x, y_w, y_l)$ in \texttt{IPO}. These fixed labels are agnostic about any extra information we may have about our data triplets $(x , y_w , y_l)$. However, in some applications we may have access to some auxiliary scores, $s_w,s_l$, (e.g.,~ratings) associated with each response, which can then be used to enrich our soft labels. 

More formally, suppose now that our dataset is comprised of ${\mathcal D} = (x,y_w, s_w, y_l, s_l)$. To obtain the soft labels we can employ, for instance, $p = \text{softmax}(s_w, s_l)$. Combining this with the \texttt{IPO} loss, we recover \texttt{Distilled DPO} (Eq.~7 in~\citealt{distilled_DPO}). Using the same soft labels, but with the CE loss recovers \texttt{RPO} (see Sec.~3.3.2 in~\citealt{nemotron}). We provide more details on both algorithms in Appendix~\ref{Appendix:RPO}. These natural extensions further demonstrate that our classification framework is fairly general as well as sufficiently flexible to capture a large number of existing DPO-style algorithms.

\section{A Concrete Example Showing that the Original \texttt{DPO}-style Problem is Under-constrained}
\label{appendix_concrete_example}
Below we give a concrete example further highlighting that the winner-loser probabilities can move in arbitrary directions.
\begin{remark}[A Concrete Example]
Suppose we have $\varepsilon=1/11$ and for simplicity we set $\beta=1$, $\piRef(y_w|x) = 0.02$, and $\piRef(y_l|x) = 0.01$, which result in $\eta = 20$. (See Eqn~\ref{DPO_solution_characterization} for the definition of $\eta$) We identify two pairs of probabilities that satisfy~\eqref{DPO_solution_characterization}: 
%$(\pi_{\theta^{*}}(y_w|x),\pi_{\theta^{*}}(y_l|x)) = (0.4,0.02)$ and %$(\pi_{\theta^{*}}(y_w|x),\pi_{\theta^{*}}(y_l|x)) = (0.001,0.00005)$. 
\begin{enumerate}
    \item $\big(\pi_{\theta^{*}}(y_w|x),\ \pi_{\theta^{*}}(y_l|x)\big) = (0.4,\ 0.02) \ .$
    \item $\big(\pi_{\theta^{*}}(y_w|x),\ \pi_{\theta^{*}}(y_l|x)\big) = (0.001,\ 0.0002)\ .$
\end{enumerate}
In the first case both probabilities (including the loser) increase under $\pi_{\theta^{*}}$ relative to $\piRef$. In sharp contrast, both probabilities (including the winner) decrease under $\pi_{\theta^{*}}$ relative to $\piRef$. Note that the increase is bounded as the two probabilities can go up until they hit $\pi_{\theta^{*}}(y_w|x) + \pi_{\theta^{*}}(y_l|x) = (1 + \eta) \pi_{\theta^{*}}(y_l | x) = 1$. Perhaps more concerning is the observation that the two probabilities can decrease arbitrarily and even collapse to $0$ while still maintaining ${\mathcal L}\big(p_{\theta^{*}},p\big) = 0$.
\end{remark}

\section{Proofs of Section~\ref{sec:algos}}
\label{Appendix:C2-DPO}

\subsection{Proof of Proposition~\ref{prop:monotonic}}
\label{Appendix:C2-DPO-Proposition}

\tech*
\begin{proof}
For simplicity we assume that $\varphi$ is monotonically increasing (the same arguments can be easily applied to the monotonically decreasing case). %From~\eqref{DPO_solution_characterization}, we may write
%
%\begin{equation*}
%\frac{\pi_{\theta^{*}}(y_w|x)}{\piRef(y_w|x)} = \sqrt[\beta]{\frac{1-\varepsilon}{\varepsilon}} \cdot \frac{\pi_{\theta^{*}}(y_l|x)}{\piRef(y_l|x)}\ .    
%\end{equation*}
%
Since $\varepsilon < 1/2$, we have that $(1 - \varepsilon)/\varepsilon > 1$, and therefore from~\eqref{DPO_solution_characterization1}, we get that
%$\frac{\pi_{\theta^{*}}(y_w|x)}{\piRef(y_w|x)} > \frac{\pi_{\theta^{*}}(y_l|x)}{\piRef(y_l|x)}$, and so:
%
\begin{equation}
 \pi_{\theta^{*}}(y_w|x) > \frac{\pi_{\theta^*}(y_l|x)}{\piRef(y_l|x)} \cdot \piRef(y_w|x) \ .
    \label{eq:base}
\end{equation}
We will show that $\pi_{\theta^*}(y_l|x)<\piRef(y_l|x)$ by contradiction. To this end, we assume that $\pi_{\theta^*}(y_l|x)\geq\piRef(y_l|x)$. 
%Then, we have that $\frac{\pi_{\theta^*}(y_l|x)}{\piRef(y_l|x)}\geq 1$ which combined with (\ref{eq:base}) means that 
From \eqref{eq:base}, we immediately obtain that $\pi_{\theta^*}(y_w|x) > \piRef(y_w|x)$. Applying the constraint function $\varphi$ to the following two inequalities:
\begin{equation*}
    \pi_{\theta^*}(y_w|x) > \piRef(y_w|x) \quad \text{and} \quad \pi_{\theta^*}(y_l|x) \geq \piRef(y_l|x) \ ,
\end{equation*}
%\begin{eqnarray*}
%    \pi_{\theta^*}(y_w|x)&>&\piRef(y_w|x) \\
%    \pi_{\theta^*}(y_l|x)&\geq&\piRef(y_l|x) \ ,
%\end{eqnarray*}
%
and using its monotonicity, we obtain
\begin{equation*}
    \varphi\big(\pi_{\theta^{*}}(y_w|x)\big) > \varphi\big(\piRef(y_w|x)\big) \quad \text{and} \quad \varphi\big(\pi_{\theta^{*}}(y_l|x)\big) \geq \varphi\big(\piRef(y_l|x)\big).
\end{equation*}
%
%when $\varphi$ is monotonically increasing, or
%\begin{eqnarray*}
%    \varphi(\pi_{\theta^*}(y_w|x))&<&\varphi(\piRef(y_w|x)) \\
%    \varphi(\pi_{\theta^*}(y_l|x))&\leq&\varphi(\piRef(y_l|x)),
%\end{eqnarray*}
%when $\varphi$ is monotonically decreasing. 
By adding both sides of the above two inequalities, we get a contradiction to \eqref{P:Cons}. Thus, proving that $\pi_{\theta^*}(y_l|x)<\piRef(y_l|x)$. Similarly, we can prove that $\pi_{\theta^{*}}(y_w|x)>\piRef(y_w|x)$. %which concludes the proof.
\end{proof}

\subsection{Proof of Lemma~\ref{L:Technical}} 
\label{Appendix:C2-DPO-Lemma}

%First, we recall the lemma.
    \technical*
    \begin{proof}
        First, we write
        \begin{equation*}
            a + b = a\left(\frac{a}{a + b}\right)^{-1}\ ,
        \end{equation*}
        which tanks to classical logarithmic rules yields that
        \begin{equation*}
            \log(a + b) = \log a - \log \frac{a}{a + b} = \log a - \log \frac{1}{1 + b/a}\ .
        \end{equation*}
        Using the definition of the sigmoid function we get
        \begin{equation*}
            \frac{1}{1 + b/a} = \frac{1}{1 + \exp(\log b - \log a)}= \sigma(\log a - \log b)\ ,
        \end{equation*}
        which proves the desired result.
    \end{proof}

\section{\texttt{C2-DPO} Implementation Details} \label{Appendix:c-3dpo_pseudo_code}
    We show implementation details of \texttt{C2-DPO-Log-$\ell_1$}, \texttt{C2-DPO-Log-$\ell_2$}, \texttt{C2-DPO-$\mathcal{I}$-$\ell_1$}, and \texttt{C2-DPO-$\mathcal{I}$-$\ell_2$} below. 

    \begin{verbatim}
def dpo_loss(pi_logps, ref_logps, yw_idxs, yl_idxs, beta, algo_name, reg_coeff):
    """
    pi_logps: policy logprobs, shape (B,)
    ref_logps: reference model logprobs, shape (B,)
    yw_idxs: preferred completion indices in [0, B-1], shape (T,)
    yl_idxs: dispreferred completion indices in [0, B-1], shape (T,)
    beta: temperature controlling strength of KL penalty
    Each pair of (yw_idxs[i], yl_idxs[i]) represents the indices of a single
    preference pair.
    """
    pi_yw_logps,  pi_yl_logps =  pi_logps[yw_idxs],  pi_logps[yl_idxs]
    ref_yw_logps, ref_yl_logps = ref_logps[yw_idxs], ref_logps[yl_idxs]
    pi_logratios  = pi_yw_logps - pi_yl_logps
    ref_logratios = ref_yw_logps - ref_yl_logps
    losses = -F.logsigmoid(beta * (pi_logratios - ref_logratios))
    if algo_name == 'c2dpo_log_l1':
        reguralized_losses_without_square = \
            pi_yw_logps + pi_yl_logps - ref_yw_logps - ref_yl_logps
        reguralized_losses = (reguralized_losses_without_square) ** 2
        losses = losses + reg_coeff * reguralized_losses
    elif algo_name == 'c2dpo_log_l2':
        reguralized_losses = F.l1_loss(
            pi_yw_logps + pi_yl_logps,
            ref_yw_logps + ref_yl_logps
        )
        losses = losses + reg_coeff * reguralized_losses
    elif algo_name == 'c2dpo_i_l1':
        reguralized_losses_without_square = \
            pi_yw_logps - F.logsigmoid(pi_yw_logps - pi_yl_logps) - \
                (ref_yw_logps - F.logsigmoid(ref_yw_logps - ref_yl_logps))
        reguralized_losses = (reguralized_losses_without_square) ** 2
        losses = losses + reg_coeff * reguralized_losses
    elif algo_name == 'c2dpo_i_l2':
        reguralized_losses = F.l1_loss(
            pi_yw_logps - F.logsigmoid(pi_yw_logps - pi_yl_logps),
            ref_yw_logps - F.logsigmoid(ref_yw_logps - ref_yl_logps)
        )
        losses = losses + reg_coeff * reguralized_losses
    rewards = beta * (pi_logps - ref_logps).detach()
    return losses, rewards
    \end{verbatim}

\section{\textrm{Ultrafeedback Binarized} Claude 3.5 Sonnet v2 win rate prompt and hyperparameters} 
 \label{Appendix:win_rates_prompt}
    In this section we include the prompt used to generate win rates for the \textrm{Ultrafeedback Binarized} experiments. We use Claude Sonnet 3.5 v2 (AWS Bedrock model ID anthropic.claude-3-5-sonnet-20241022-v2:0) to generate win rates. We set the max\_tokens to 1024, temperature to 0, and used default value 0.999 for top\_p and top\_k disabled.
    \begin{verbatim}
For the following query to a chatbot, which response is more helpful?

Query: <prompt>

Response A:
<one of the responses>

Response B:
<the other response>

FIRST provide a one-sentence comparison of the two responses and explain
which you feel is more helpful. SECOND, on a new line, state only "A" or "B"
to indicate which response is more helpful. Your response should use the format:
Comparison: <one-sentence comparison and explanation>
More helpful: <"A" or "B">
    \end{verbatim}

\section{\textrm{Reddit TL;DR} Claude 3.5 Sonnet v2 win rate prompt and hyperparameters} 
 \label{Appendix:win_rates_prompt_tldr}
    In this section we include the prompt used to generate win rates for the \textrm{Reddit TL;DR} experiments. We use Claude Sonnet 3.5 v2 (AWS Bedrock model ID anthropic.claude-3-5-sonnet-20241022-v2:0) to generate win rates. We set the max\_tokens to 1024, temperature to 0, and used default value 0.999 for top\_p and top\_k disabled.
    \begin{verbatim}
Which of the following summaries does a better job of summarizing the most
important points in the given forum post, without including unimportant or
irrelevant details? A good summary is both precise and concise.
Post: {prompt}
Summary A:
{baseline_response}
Summary B:
{generated_response}
FIRST provide a one-sentence comparison of the two summaries, explaining 
which you prefer and why. SECOND, on a new line, state only "A" or "B" to 
indicate your choice. Your response should use the format:
Comparison: <one-sentence comparison and explanation>
Preferred: <"A" or "B">
    \end{verbatim}
\begin{figure*}[h]
    \centering
    \includegraphics[width=\textwidth]{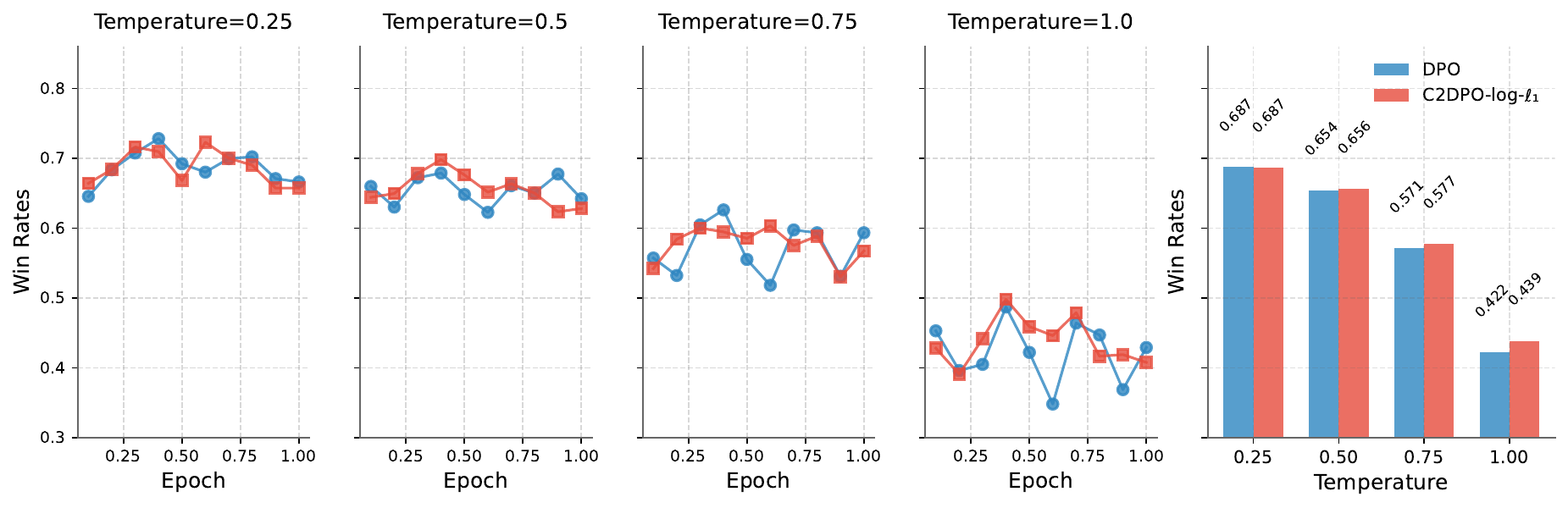}
    \caption{}
    \label{zzzz}
\end{figure*}
\section{Additional win rates analysis of \texttt{C-3DPO} with \textrm{Zephyr-7B-SFT} aligned on \textrm{Ultrafeedback Binarized}} \label{win_rates_across_10_inferences_7b_ultrafeedback}
\begin{figure*}[h]
    \centering
    \includegraphics[width=\textwidth]{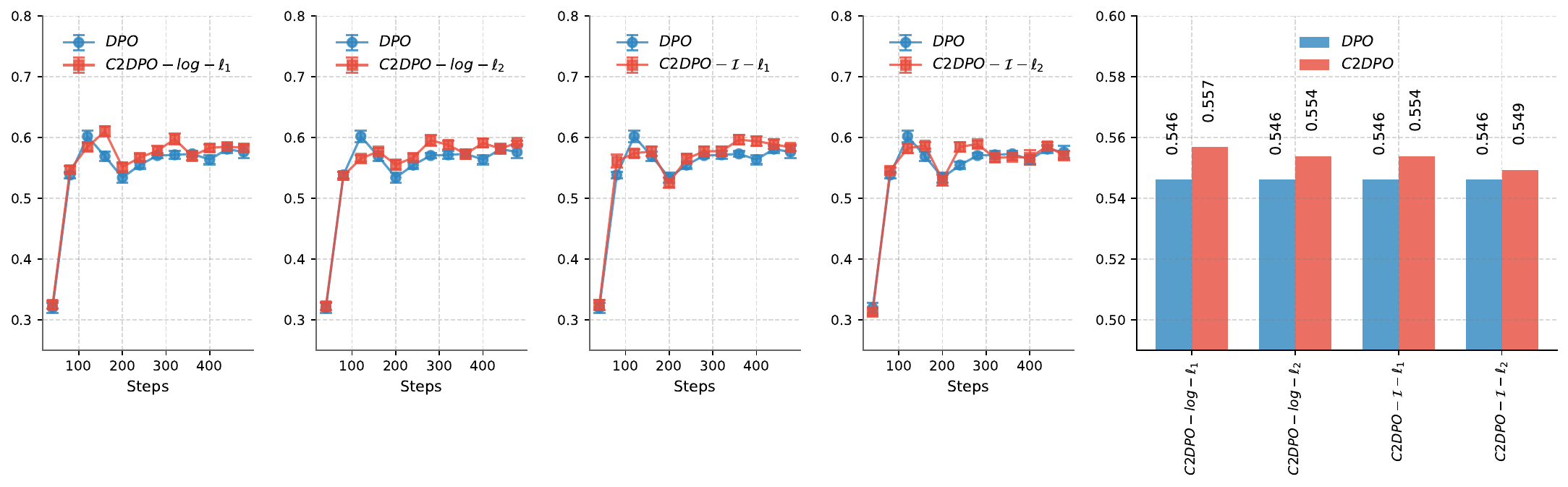}
    \caption{Win rates comparison of \textrm{Zephyr-7B-SFT} aligned on \textrm{Ultrafeedback Binarized} using \texttt{DPO} and \texttt{C-3DPO}. The first 4 plots show win rates of \texttt{C-2DPO} at individual checkpoints across 10 different inference runs. The last plot shows mean and standard error of win rates across all checkpoints and all inference runs.}
    \label{fig:ultrafeedback_zp7b_win_rates}
\end{figure*}
Figure \ref{fig:ultrafeedback_zp7b_win_rates} shows win rates comparison between \textrm{Zephyr-7B-SFT} aligned with \texttt{DPO} and \texttt{C-3DPO}. We align \textrm{Zephyr-7B-SFT} following~\citet{rasul2024preference} using \texttt{DPO}, \texttt{C-3DPO-Log-$\ell_1$}, \texttt{C-3DPO-Log-$\ell_2$}, \texttt{C-3DPO-$\mathcal{I}$-$\ell_1$}, and \texttt{C-3DPO-$\mathcal{I}$-$\ell_2$} for one epoch, all \texttt{C-3DPO} use hyper-parameter $\lambda=2\times 10^{-4}$. With each checkpoint, we generate responses using test split of \textrm{Ultrafeedback Binarized} using hyperparameters max\_tokens=1000, temperature=1.0, top\_p=0.9, top\_k=50. Different from the head to head setting, we ask Claude to compare the generated response directly with the preferred response in the dataset. The win rates and standard errors are calculated based on 10 different inference runs. 

% \section{Log probability of preferred and rejected sequences with different $\lambda$} \label{logps_tune_lambda_7b_ultrafeedback}
% \begin{figure*}[h]
%     \centering
%     \includegraphics[width=\textwidth]{figures/c3dpo_ultrafeedback_zp7b_tune_lambda.png}
%     \caption{Log probability of \textrm{Zephyr-7B-SFT} aligned on \textrm{Ultrafeedback Binarized} using \texttt{DPO} and \texttt{C-3DPO-Log-$\ell_1$} with different $\lambda$ values. A stronger $\lambda$ mitigates probability collapose.}
%     \label{fig:ultrafeedback_zp7b_tune_lambda}
% \end{figure*}
% We align \textrm{Zephyr-7B-SFT} on \textrm{Ultrafeedback Binarized} using \texttt{C-3DPO-Log-$\ell_1$} with different $\lambda$ values. Figure \ref{fig:ultrafeedback_zp7b_tune_lambda} shows log probability changes of preferred and rejected sequences. Stronger $\lambda$ values, e.g. $5 \times 10^{-0}$ and $5 \times 10^{-1}$ mitigate the probability collapse behavior of \texttt{DPO}. 
\clearpage
\section{Limitation}
We identify two limitations in our work:

First, in analyzing various optimization problems, we look at the optimal solution and study its properties. However, this is a major simplification, because in practice we rely on variants of the gradient descent optimizer which generally speaking only finds a locally-optimal solution.

Second, while we did our best to experiment with various model sizes and datasets, our current experimental scope spans across models up to 13B parameters and 2 different datasets.
\section*{NeurIPS Paper Checklist}

\begin{enumerate}

\item {\bf Claims}
    \item[] Question: Do the main claims made in the abstract and introduction accurately reflect the paper's contributions and scope?
    \item[] Answer: \answerYes{} % Replace by \answerYes{}, \answerNo{}, or \answerNA{}.

\item {\bf Limitations}
    \item[] Question: Does the paper discuss the limitations of the work performed by the authors?
    \item[] Answer: \answerYes{} % Replace by \answerYes{}, \answerNo{}, or \answerNA{}.

\item {\bf Theory assumptions and proofs}
    \item[] Question: For each theoretical result, does the paper provide the full set of assumptions and a complete (and correct) proof?
    \item[] Answer: \answerYes{}

    \item {\bf Experimental result reproducibility}
    \item[] Question: Does the paper fully disclose all the information needed to reproduce the main experimental results of the paper to the extent that it affects the main claims and/or conclusions of the paper (regardless of whether the code and data are provided or not)?
    \item[] Answer: \answerYes{}

\item {\bf Open access to data and code}
    \item[] Question: Does the paper provide open access to the data and code, with sufficient instructions to faithfully reproduce the main experimental results, as described in supplemental material?
    \item[] Answer: \answerNo{}
    \item[] Justification: We are unable to release code at this time due to institutional or organizational IP constraints. We can not elaborate on this due to anonymity concerns. We hope to explore possibilities for a future release, pending approval.

\item {\bf Experimental setting/details}
    \item[] Question: Does the paper specify all the training and test details (e.g., data splits, hyperparameters, how they were chosen, type of optimizer, etc.) necessary to understand the results?
    \item[] Answer:  \answerYes{}

\item {\bf Experiment statistical significance}
    \item[] Question: Does the paper report error bars suitably and correctly defined or other appropriate information about the statistical significance of the experiments?
    \item[] Answer: \answerYes{} % Replace by \answerYes{}, \answerNo{}, or \answerNA{}.
    \item[] Justification: Note that we only do so for the main experiment of the paper (Figure 3). Reporting error bars for all experiments required learning tens of runs of LLM as the judge (with extremely large closed-weight LLMs) which was expensive to perform.

\item {\bf Experiments compute resources}
    \item[] Question: For each experiment, does the paper provide sufficient information on the computer resources (type of compute workers, memory, time of execution) needed to reproduce the experiments?
    \item[] Answer: \answerYes{} % Replace by \answerYes{}, \answerNo{}, or \answerNA{}.
    \item[] We conducted all of our experiments on 8 Nvidia H100 GPUs.
    
\item {\bf Code of ethics}
    \item[] Question: Does the research conducted in the paper conform, in every respect, with the NeurIPS Code of Ethics 
    \item[] Answer: \answerYes{}

\item {\bf Broader impacts}
    \item[] Question: Does the paper discuss both potential positive societal impacts and negative societal impacts of the work performed?
    \item[] Answer: \answerNA{}
    \item[] Justification: The focus in this paper is to improve the core algorithmic ideas in preference optimization. The aim is to make AI more aligned with human intent, but like any other powerful technology, these algorithms should always be used with adequate care and due diligence.
    
\item {\bf Safeguards}
    \item[] Question: Does the paper describe safeguards that have been put in place for responsible release of data or models that have a high risk for misuse (e.g., pretrained language models, image generators, or scraped datasets)?
    \item[] Answer: \answerNA{}

\item {\bf Licenses for existing assets}
    \item[] Question: Are the creators or original owners of assets (e.g., code, data, models), used in the paper, properly credited and are the license and terms of use explicitly mentioned and properly respected?
    \item[] Answer: \answerNA{}

\item {\bf New assets}
    \item[] Question: Are new assets introduced in the paper well documented and is the documentation provided alongside the assets?
    \item[] Answer: \answerNA{}

\item {\bf Crowdsourcing and research with human subjects}
    \item[] Question: For crowdsourcing experiments and research with human subjects, does the paper include the full text of instructions given to participants and screenshots, if applicable, as well as details about compensation (if any)? 
    \item[] Answer: \answerNA{}

\item {\bf Institutional review board (IRB) approvals or equivalent for research with human subjects}
    \item[] Question: Does the paper describe potential risks incurred by study participants, whether such risks were disclosed to the subjects, and whether Institutional Review Board (IRB) approvals (or an equivalent approval/review based on the requirements of your country or institution) were obtained?
    \item[] Answer: \answerNA{}

\item {\bf Declaration of LLM usage}
    \item[] Question: Does the paper describe the usage of LLMs if it is an important, original, or non-standard component of the core methods in this research? Note that if the LLM is used only for writing, editing, or formatting purposes and does not impact the core methodology, scientific rigorousness, or originality of the research, declaration is not required.
    %this research? 
    \item[] Answer: \answerYes{} % Replace by \answerYes{}, \answerNo{}, or \answerNA{}.
    \item[] Justification: As mentioned in the experimental section, we used LLM as the judge to asses the performance of our proposed algorithm against the baseline. This has become the standard practice in the prference optimization literature.

\end{enumerate}
\end{document}